\newcommand{\nosemic}{\renewcommand{\@endalgocfline}{\relax}}
\newcommand{\dosemic}{\renewcommand{\@endalgocfline}{\algocf@endline}}
\let\oldnl\nl
\newcommand{\nonl}{\renewcommand{\nl}{\let\nl\oldnl}}
\def\lev{\textit{lev}\,}
\def\succ{\textit{succ}\,}
\def\root{\textit{roots}\,}
\def\leaves{\textit{leaves}\,}
\def\chain{$\textrm{PT-DL}_C$}
\def\tree{$\textrm{PT-DL}_T$}
\def\star{$\textrm{PT-DL}_S$}
\def\thmspace{0.2em}
\newtheorem{theorem}{\hspace{\thmspace}{\bf Theorem}\!}
\newtheorem{definition}{\hspace{\thmspace}{\bf Definition}\!}
 \newcommand{\cD}{\mathcal{D}}
 \newcommand{\cL}{\mathcal{L}}
\newcommand{\cM}{\mathcal{M}} \newcommand{\cN}{\mathcal{N}}
\newcommand{\cR}{\mathcal{R}} 
 \newcommand{\cX}{\mathcal{X}}
\newcommand{\cY}{\mathcal{Y}}
\newcommand{\EE}{\mathbb{E}} \newcommand{\RR}{\mathbb{R}}
\newlength\myindent
\title{A Privacy-Preserving and Trustable Multi-agent Learning Framework}
\author{%
  Anudit Nagar\\
  {\small Bennett University}\\
  \texttt{anudit@bennett.edu.in}
  \And
  Cuong Tran\\
  {\small Syracuse University}\\
  \texttt{cutran@syr.edu}\\
  \And
  Ferdinando Fioretto \\
  {\small Syracuse University} \\
  \texttt{ffiorett@syr.edu}\\
}
\begin{document}

\maketitle

\begin{abstract}
Distributed multi-agent learning enables agents to cooperatively train a model without requiring to share their datasets. While this setting ensures some level of privacy, it has been shown that, even when data is not directly shared, the training process is vulnerable to privacy attacks including data reconstruction and model inversion attacks. Additionally, malicious agents that train on inverted labels or random data, may arbitrarily weaken the accuracy of the global model. This paper addresses these challenges and presents Privacy-preserving and trustable Distributed Learning (PT-DL), a fully decentralized framework that relies on Differential Privacy to guarantee strong privacy protection of the agents data, and Ethereum smart contracts to ensure trustability. The paper shows that PT-DL is resilient up to a 50\% collusion attack, with high probability, in a malicious trust model and the experimental evaluation illustrates the benefits of the proposed model as a privacy-preserving and trustable distributed multi-agent learning system on several classification tasks.
\end{abstract}
\section{Introduction}
\label{sec:introduction}

Agencies are increasingly leveraging distributed data shared across 
organizations to augment their AI-powered services. Examples include 
transportation services sharing location-based data to improve 
their on-demand capabilities and hospitals sharing patient data to 
prevent epidemic outbreaks. The proliferation of these applications leads to 
a transition from proprietary data acquisition and processing to 
distributed data ecosystems, where agents learn and make decisions 
using data owned by different organizations.

While the cooperative use of rich dataset may enhance the accuracy and
robustness of the individual agents' models, many of these
datasets contain sensitive information whose use is strictly regulated 
by local policies.  For example, medical data is regulated by HIPAA regulations \cite{hipaa} in the US, and the E.U.~General Data Protection Regulation (GDPR) requires the consumer data privacy to be protected \cite{geyer2017differentially}. 

Distributed multi-agent learning enables agents to 
cooperatively train a learning model without requiring them to share 
their dataset. Typical multi-agent learning frameworks, including 
federated learning \cite{mcmahan2017communicationefficient}, allow individual agents to train their
local models on their own datasets and share model parameters with a 
centralized agent that aggregates the received parameters and
sends them back to the agents. This simple, yet 
effective procedure, repeats for several iterations and allows the participating agents to learn a global model without accessing data of other agents.

However, standard multi-agent learning algorithms are susceptible to 
privacy and trustability issues.  Indeed, using solely the model
parameters, model inversion attacks can  be performed to gain insight
about the input data used during the training 
\cite{geiping2020inverting,fang2019local}, and
membership inference attacks have been shown successful to
determine if an individual data was used or not during model training \cite{shokri:17}. These privacy-invasive attacks can be  mitigated with the adoption of \emph{Differential Privacy}, a formal  privacy model that
computes and bounds the privacy loss associated to  the participation
of an individual into a computation.  However, differential privacy
does not protect against malicious agents  that may not comply with
the protocol--e.g., by altering their data  labels, performing what
known as \emph{model inversion attacks} \cite{10.1145/2810103.2813677}, or training over
random data.  These behaviors can strongly penalize the accuracy of the
resulting global model. Defending against such attacks requires some form of \emph{trustability} and exclude the flagged contributions from the global model updates. The decentralized setting of multi-agent learning renders this task particularly challenging.

To address these issues, this paper proposed the use a decentralized
computational environment, such as blockchains, that enables
differentially private multi-agent training, guaranteeing both privacy and trustability.  The resulting framework, called \emph{trustable and Private Distributed Learning}
(PT-DL) relies on the Ethereum blockchain, that combines an immutable
data storage with a Turing-complete computational environment 
\cite{wood2014ethereum} and guarantees the correctness of the programs executed over the blockchain. 

The privacy requirement is enforced by using a clipping approach on
the model parameters and the privacy analysis relies on composition methods \cite{dwork:14} and the moment accountant for the
Sampled Gaussian Mechanism \cite{abadi:16,mironov2019rnyi}. 
trustability is achieved directly through the use of the computation ran on the immutable blockchain combined with
a decentralized verification procedure that validates the genuineness of the agent contributions to the model.
The paper shows that the trustable scheme implemented is robust, with high probability, up to the case where half of the agents may collude.

In addition to these properties, PT-DL requires no central trusted server
and allows agents to exploit several network communication topologies to reduce the negative impact of frequent model aggregations as well as lowering the communication costs. 
The empirical analysis illustrates the benefits of the proposed model on classification and medical diagnosis tasks, based on a COVID-19 dataset, in terms of accuracy, communication cost, and resiliency to biased data distributions and malicious updates. Additionally, it reports an excellent trade-off between accuracy and privacy and shows that PT-DL may represent a promising step towards a practical tool for privacy-preserving and trustable multi-agent learning.

This work is an extended version of \cite{Nagar:AAMAS21}.

\section{Related Work}
\label{sec:related_work}

Multi-agent learning has been studied extensively, especially in the context of optimization \cite{balcan2012distributed,fercoq2014fast,shamir2014distributed,Fioretto:JAIR-18} and communication efficiency 
\cite{shamir2014distributed,zhang2013information,yang2013trading,zhang2018communication,ma2015adding,Fioretto:AAAI-16}, e.g., by compressing the information that is passed among the agents during training. 

In the context of learning non-convex functions, as in deep learning tasks, from decentralized data, \emph{Federated learning} \cite{konevcny2015federated,mcmahan2016communication} is considered the de-facto standard framework. 
A cornerstone in federated learning algorithms is \textsl{Federated Averaging} (FedAvg), a variant of distributed stochastic gradient descent (SGD). At each iteration, a selected subset of FedAvg agents locally and independently take a gradient descent step on the current model parameters using their local data, and the server then computes a weighted average of the resulting parameters. 
There are several contributions that apply differential privacy in the context of multi-agent learning. \citet{cheng2018leasgd,cheng_19} 
focus on a differentially private decentralized learning system, but their analysis is applied to strongly convex problems only. 
\citet{bellet2018personalized,Bellet2017FastAD} 
proposes a differentially private asynchronous decentralized learning systems for convex problems and uses the Laplace mechanism over a block coordinate descent algorithm to achieve privacy.
\citet{Fioretto:AAMAS-19} develops a federated data sharing protocol 
which guarantees differential privacy. 

A different line of work applies the Alternating Direction Method of Multipliers to solve a distributed optimization problem under privacy constraints \cite{8736857,7563366,zhang2018improving}. These, however, are not generally adopted for training deep learning models in a distributed system of agents. 
An important contribution, by \citet{geyer2017differentially}, develops a differentially private federated learning system which builds on FedAvg. This paper uses it as a baseline. 

Although these systems guarantee privacy, the reported experiments show that the induced accuracy is not always satisfactory and, in addition, consider privacy in isolation, ignoring trustability. 
Finally, there have also been several proposals of using blockchain to support the training of deep neural networks \cite{weng2019deepchain} and federated learning \cite{kim2018device} for incentive purpose. 

In contrast to the work above, this paper develops a distributed 
multi-agent learning framework that ensures both privacy, 
through the use of differentially private model updates, and trustability, coordinating computations on a blockchain. Finally,
it proposes a simple, yet effective, communication model 
that may reduce the number of communication rounds required during model training when compared to standard federated learning approaches.

\section{Preliminaries}

\subsection{Problem Settings and Goals}
\label{sec:problem_definition}
The paper considers a collection of $K$ agents, each holding a dataset 
$D_a$ ($a \in [K]$) consisting of $n_a$ individual data points $(X_i, Y_i)$, 
with $i \!\in\! [n_a]$ drawn from an unknown distribution. 
Therein, $X_i \!\in\! \mathcal{X}$ is a feature vector and $Y_i \!\in\! 
\mathcal{Y}$ is a label. 
The paper assumes that an individual data is not repeated across 
datasets, and, thus, the agents' datasets are disjoint. Finally, it 
denotes with $D = \cup_{a \in [K]} D_a$ as the union of the agents' datasets and with 
$n = \sum_{a \in [K]} n_a$ the size of $D$.
For example, consider the case of a classifier that needs to predict 
the presence of a particular disease. The training example features 
$X_i$ may describe the individual's X-Ray image, gender, age, and 
demographics, and $Y_i$ represents whether the individual 
has the disease.
Each agent may hold a small dataset with samples representative of a specific demographics and, thus, the learning task may benefit from  using data samples of other agents.
Therefore, the goal is to learn a \emph{global} classifier 
$\cM_\theta: \cX \to \cY$, where $\theta$ is a real valued vector 
describing the model parameters. The model quality is measured in terms
of a non-negative, and assumed differentiable, \emph{loss function}
$\cL : \cY \times \cY \to \mathbb{R}_+$, and the problem is that of 
minimizing the empirical risk function:
\begin{equation}
\label{eq:ERM}
\min_{\theta} J(\cM_\theta, D) = 
\frac{1}{K} \sum_{a \in [K]} J_a(\cM^a_\theta, D_a),
\end{equation}
where $\cM_\theta^a$ is a local classifier associated to agent $a \in [K]$ and $J_a$ is its empirical risk function, defined as
\begin{equation}
\label{eq:ERM_agent}
J_a(\cM^a_\theta, D_a) = \frac{1}{n_a}
  \sum_{(X_i, Y_i) \in D_a} \cL\left(\cM^a_\theta(X_i), Y_i \right).
\end{equation}
The paper focuses on instances of the problem \eqref{eq:ERM} where 
the functions $J_a$ are non-convex, as in deep learning tasks.

The setting \emph{does not} assume that the data contributed by each of the $K$ agents are \emph{i.i.d.} or \emph{balanced}, that is, different agents may have data that is not representative of the whole population. 

As discussed in the introduction, there are two practical aspects 
that make the development of systems that solve problem \eqref{eq:ERM} challenging: privacy and trustability. The following sections review the main concepts adopted to address these challenges.

\subsection{Blockchains} \label{sub:blockchain}

\emph{Blockchains} are decentralized general transaction ledgers 
that allow participants to create \emph{unchangeable} records, each
time-stamped and linked to the previous one, making each operation 
verifiable and auditable. A protocol executed on a blockchain is 
referred to as \emph{smart contract} 
and this paper adopts the \emph{Ethereum} protocol, a Turing-complete 
computational environment \cite{wood2014ethereum}. 
The Ethereum protocol ensures that smart contracts are executed
correctly, thus, agents can trust that any data sent to the
blockchain will not be corrupted and that smart contracts logic will 
be executed as intended. With these guarantees, blockchains are appropriate to run distributed learning trustability procedures. 

While this environment guarantees that the data stored on the blockchain is immutable, it does not guarantee data privacy.

\subsection{Differential Privacy}
\label{sub:differential_privacy}

Differential privacy (DP) \cite{dwork:06} is a strong privacy notion
used to quantify and bound the privacy loss of an individual
participation to a computation.  
\emph{The privacy goal of this work is to guarantee that the output of 
the learning model does not differ much when a single individual is 
added or removed to the dataset}, limiting the amount of information 
that the model reveals about any individual.  

The action of changing a single attribute from a dataset $D$,
resulting in a new dataset $D'$, defines the notion of \emph{dataset
adjacency}. Two dataset $D$ and $D' \in \mathcal{X}^n$ are said
adjacent, denoted $D \sim D'$, if they differ in at most the addition 
or removal of a single entry (e.g., in one individual's participation).

\begin{definition}[Differential Privacy]
  \label{dp-def}
  A randomized mechanism $\mathcal{M} : \mathcal{X}^n \to \mathcal{R}$ 
  with domain $\mathcal{X}^n$ and range $\mathcal{R}$ is $(\epsilon, 
  \delta)$-differentially private if, for any two adjacent datasets  
  $D, D' \in \mathcal{X}^n$, and any subset of 
  output responses $R \subseteq \mathcal{R}$:
  {
  \[
      \Pr[\mathcal{M}(D) \in R ] \leq  e^{\epsilon} 
      \Pr[\mathcal{M}(D') \in R ] + \delta.
  \]
  }
\end{definition}
\noindent When $\delta=0$ the algorithm is said to satisfy \emph{pure}
differential privacy. 
Parameter $\epsilon > 0$ describes the \emph{privacy loss} of the algorithm, 
with values close to $0$ denoting strong privacy, while parameter 
$\delta \in [0,1]$ captures the probability of failure of the algorithm to 
satisfy $\epsilon$-differential privacy. The global sensitivity $\Delta_f$ of a real-valued 
function $f: \mathcal{X}^n \to \mathbb{R}^k$ is defined as the maximum amount 
by which $f$ changes  in two adjacent inputs $D$ and $D'$:
\(
  \Delta_f = \max_{D \sim D'} \| f(D) - f(D') \|_2.
\)
In particular, the Gaussian mechanism, defined by
\[
    \mathcal{M}(D) = f(D) + \mathcal{N}(0, \Delta_f^2 \, \sigma^2), 
\]
\noindent where $\mathcal{N}(0, \Delta_f\, \sigma^2)$ is 
the Gaussian distribution with $0$ mean and standard deviation 
$\Delta_f\, \sigma^2$, satisfies $(\epsilon, \delta)$-differential privacy for 
$\delta > \frac{4}{5} \exp(-(\sigma\epsilon)^2 / 2)$ 
and $\epsilon < 1$ \cite{dwork:14}. 

Differential privacy also satisfies several important properties.
In particular, \emph{sequential composition} theorems allow to reason 
about the privacy loss resulting by the repeated application of a 
function to a dataset \cite{dwork:14,kairouz2015composition}, 
\emph{parallel composition} \cite{dwork:14} ensures that the application 
of an $(\epsilon, \delta)$-differentially private mechanism to disjoint dataset does not induce additional privacy loss, and 
\emph{post-processing immunity} \cite{dwork:14} ensures that differential privacy is preserved even if the output of a mechanism is post-processed throughout an arbitrarily data-independent transformation.




Although advanced composition theorems allow a tight privacy loss 
analysis, which scales sublinearly in the number of applications of 
the DP mechanisms, in iterative algorithms, like those used in this 
paper, this process quickly leads to large privacy losses. To 
address the shortcomings, this paper adopts the \emph{R\'{e}nyi 
Differential Privacy} (RDP) model by \citet{Mironov_2017}. 
RDP shares many important properties with the standard definition of 
differential privacy, including parallel composition and 
post-processing immunity, while additionally allowing a tighter 
privacy loss analysis for iterative processes (see Section \ref{sec:privacy_analysis} for details).


\begin{figure}[!t]
\centering
\includegraphics[width=0.95\columnwidth]{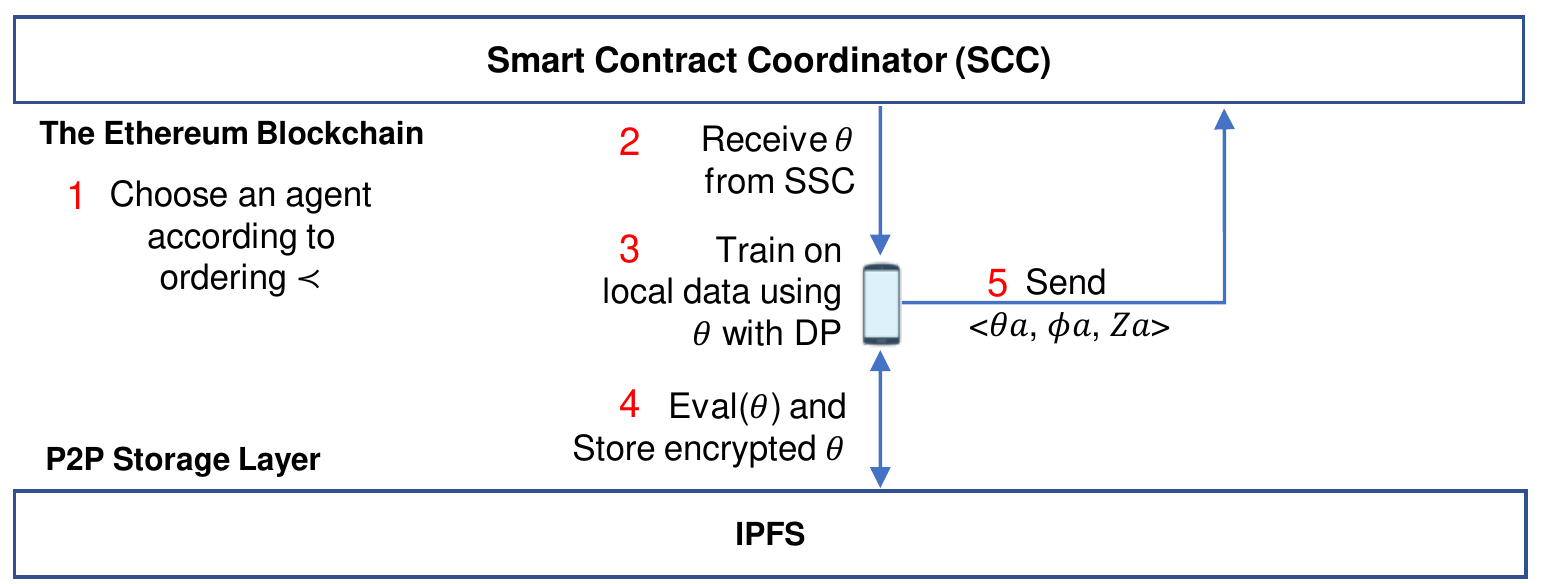}
\caption{Flow diagram of the $\textrm{PT-DL}$ Framework}
\label{fig:framework}
\end{figure}

\section{The PT-DL Framework} %
\label{sec:block_chain_distributed_learning}

\emph{Private and trustable Distributed Learning} (PT-DL) is a fully distributed learning framework that ensures privacy and trustability while keeping the network bandwidth low. 
The framework is schematically shown in Figure \ref{fig:framework} and its main components are:
\begin{itemize}[leftmargin=*, parsep=2pt, itemsep=0pt, topsep=0pt]
  \item A \emph{Smart Contract Coordinator (SCC)}: 
  It is a program executed on the blockchain that orchestrates the 
  interaction among PT-DL agents to ensure the correct data exchange 
  aimed at training a global model. The SCC operates in \emph{rounds}.
  At each round a set of agents is invoked according to a predefined 
  ordering and their responses are aggregated. 

  \item A provably private \emph{PT-DL agent} training procedure:  
  At each round, the invoked agents use the parameters obtained by 
  the SCC to train a model over their dataset. Each training step 
  is ensured to guarantee $(\epsilon, \delta)$-differential privacy.

  \item \emph{trustability}: Prior being able to submit a model 
  update, a PT-DL agent is required to invoke a verification step 
  that ensures its trustworthiness.
\end{itemize}

\section{Privacy-Preserving and trustable Distributed Learning} %
\label{sec:block_chain_distributed_learning}

\emph{Private and trustable Distributed Learning} (PT-DL) is a fully distributed learning framework that ensures privacy and trustability while keeping the network bandwidth low. 
The framework is schematically illustrated in Figure \ref{fig:framework} and its main components are:
\begin{itemize}[leftmargin=*, parsep=2pt, itemsep=0pt, topsep=0pt]
  \item A \emph{Smart Contract Coordinator (SCC)}: 
  It is a program executed on the blockchain that orchestrates the 
  interaction among PT-DL agents to ensure the correct data exchange 
  aimed at training a global model. The SCC operates in \emph{rounds}.
  At each round a set of agents is invoked according to a predefined 
  ordering (outlined in Section \ref{sec:topologies}) and their responses are aggregated. 

  \item A provably private \emph{PT-DL agent} training procedure:  
  At each round, the invoked agents use the parameters obtained by 
  the SCC to train a model over their dataset. Each training step  is ensured to guarantee $(\epsilon, \delta)$-differential privacy 
  (see Section \ref{sec:privacy_analysis}).

  \item \emph{trustability}: Prior being able to submit a model update, a PT-DL agent is required to invoke a verification step that ensures its trustworthiness.
\end{itemize}

Prior to discussing these components in detail, this section 
introduces the following notation. The PT-DL protocol processes agents according to a given ordering $\prec$, which, in turn, allows the procedure to exploit an underlying topology to perform parallel updates or to reduce the number of aggregation operations, as described in Section \ref{sec:topologies}.
An agent $a$ is a \emph{successor} of agent $a'$ if $a' \prec a$ and 
$\lev(a) = \lev(a')+1$, where
$\lev(a)$ denotes the level of $a$ in the ordering $\prec$.
The set $\succ(a) = \{a' \prec a \mid \forall a' \in[K] \land \lev(a) = \lev(a')+1\}$
describes the set of agents that succeed agents $a$ in the ordering $\prec$.
Finally, $\root$ denotes the set of all nodes with no predecessors and $\leaves$ 
the set of all nodes with no successors.

In the following, to simplify exposition, the discussion treats interactions between the SCC and PT-DL agents as direct data exchange. However, as sketched in Figure \ref{fig:framework}, the SCC coordinates a peer-to-peer (P2P) interaction between agents, that directly exchanges data using a secure protocol, as introduced below.

\subsection{PT-DL Smart Contract Coordinator} 
\label{sub:APDL_smart_contract}

The PT-DL SCC is described in Algorithm \ref{alg:scc}.
It takes as input the list of registered agents,
the agent ordering $\prec$, and threshold values $\kappa_1, \kappa_2 > 0$ that control the acceptance of the model updates submitted by an agent. 
The algorithm starts by initializing the model parameters $\theta$ (line 1)
and, for each round, it coordinates the model updates for the distributed 
training. At each round, the SCC starts by sending the current model 
parameters $\theta$ to the collection of agents that are currently 
active in \root{} (lines 3, 5). Here, the word \emph{active} is used to 
emphasize that the algorithm is resilient to agents that may drop out 
from the process, and thus become inactive.
Each invoked agent $a$ responds with a triple, containing the trained 
model parameters $\theta_a$, an evaluation score $\phi_a \in [0,1]$, e.g., a scalar describing the agent model $\cM^a_{\theta_a}$ accuracy, and a Zero Knowledge Proof value $Z_a$, which is used by the SCC to verify that the information sent by $a$ is indeed computed by $a$ (see Supplemental material for additional information). 
Next, for every active agent $a$ in $A$, the SCC invokes a distributed
evaluation procedure whose goal is to determine whether the model update was genuine or malignant. The evaluation involves a sub-sample $A_S$ of agents $a'$ that is asked to test model $\cM^{a'}_{\theta_a}$ over their test data (a subset of $D_{a'}$) and to report the associated accuracy metrics $\phi_{a'}$ (line 8). The collection of these metrics is denoted $\bm{\phi}_{A_S}$. 
A model update is retained genuine if the median accuracy score 
$\bar{\bm{\phi}}_{A_S}$ does not degrade substantially the current (global) model accuracy $\phi_A$ (first condition of line 9) and if it is not too far from the score $\phi_a$ reported by agent
$a$ (second condition of line 9).
SCC instructs the successors agents of $a$ to continue
the model training starting from parameter $\theta_a$, if the model update is accepted (line 10), or from $\theta$ if it is rejected (line 12).
Finally, agent $a$ is removed from the set $A$ of the agents to invoke in the current round (line 13). 
When all agents have been invoked, all valid models are aggregated via averaging (line 14), and the procedure repeats.

\begin{algorithm}[!t]
    \SetAlgoLined
    \KwIn{$\{a_i\}_{i\in [K]}$: The set of participating agents;\\
          \hspace{28pt}$\prec$: The agent ordering;\\
          \hspace{28pt}$\kappa_1, \kappa_2 \in [0,1]$: Accuracy thresholds}
    \KwOut{Global trained model $\theta$}

    $\theta \gets \bm{0}^T$\\
    \For{round $r = 1, 2, \ldots $ }{
      $A \gets \{ (a, \theta) \mid a \in \root\}$\\
      \While{$A \neq \emptyset$}{
        \textbf{send} $\theta$ to all active $a \in A$\\
        \textbf{receive} $(\theta_a, \phi_a, Z_a)$ from all active 
        $a \in A$\\
        \ForAll{$(a, \theta) \in A$}
        {
          $\bm{\phi}_{A_S} \gets \{\textsc{eval}_{a'}(\theta_a) \mid a' \!\in\! A_S, \text{subsample of } A\}$\\
          \eIf{$\phi_A - {\bm{\bar{\phi}}}_{A_S} \leq \kappa_1$ 
                and 
               $|\bm{\bar{\phi}}_{A_S} - \phi_a| \leq \kappa_2$}
          {
            $A \gets A \cup \{(a', \theta_a) \mid a' \in \text{succ}(a)\}$
          }{
            $A \gets A \cup \{(a', \theta) \mid a' \in \text{succ}(a)\}$
          }
          $A \gets A \smallsetminus \{(a, \theta)\}$\\
        }
      }
      $\theta \gets \textsc{Aggregate}(\{\theta_a \mid a \in \text{leaves} 
      \land \text{accepted}(\theta_a)\})$
    }
    \caption{PT-DL Smart Contract Coordinator (SCC)}
    \label{alg:scc}
\end{algorithm}

While not explicitly described, notice that the set of participating agents can be dynamically updated during the procedure.

\subsection{PT-DL Training Agent}
\label{sub:the_agent_model}

The PT-DL agents interact with the SCC using an Ethereum client, which is used to send and receive updates from the blockchain. 
PT-DL agents use the \emph{InterPlanetary File System} (IPFS) to store and share their models. IPFS is an efficient, secure, and peer-to-peer protocol. Prior to sending their data to the IPFS, the agents encrypt it using an asymmetric encryption scheme with keys derived from the sender and receivers' Ethereum accounts. Since IPFS is a public system, encryption secures the (differentially private) model updates to external accesses.

Each PT-DL agent operates as described in Algorithm \ref{alg:agent}. Given 
its training data $D_a$, the model learning rate $\eta$, and 
a mini-batch size $b$, the agent starts registering with the SCC (line 1) and hence it listens for updates. 
The agent responds to any direct request of its model parameters $\theta_a$ by sharing them with the requesting agent using IPFS (lines 1 and 2).
Upon receiving the model parameters $\theta_a$ from some agent $a'$\footnote{
  While the coordination process is orchestrated via the SCC the model requests are peer-to-peer, and thus agents use direct data exchange to execute the PT-DL protocol.} 
the agent decrypts them  and uses them to initialize its model $\cM^a$ (line 6). 
The training processes computes the gradients over the loss function 
$\cL(\cM^a_{\theta_a}(X_i), Y_i)$ for each data sample $(X_i, Y_i)$ in a minibatch $B$ of $D_a$. The computation of these gradients is made differentially private by the introduction of Gaussian noise calibrated as shown in Definition~\ref{def:SGM} (lines 7 and 8). 
The concept relies on performing a \emph{differentially private Stochastic Gradient Descent} (DP-SDG) step \cite{abadi:16}. In a nutshell, DP-SDG computes the gradients for each data sample in a mini-batch $B$, 
clips their $L2$-norm, computes the average, and add noise to ensure 
privacy. The process is illustrated in line 8. Therein, 
\(
  \nabla^c x = 
  \nicefrac{\nabla x}{\max(1, \frac{\|\nabla x\|}{c})}
\)
denotes the gradient of a given scalar loss $x$ clipped in a $c$-ball
for $c > 0$, $\sigma > 0$ is the Gaussian standard deviation value, and $\bm{I}$ is the identity matrix. A discussion on the privacy analysis is provided in Section \ref{sec:privacy_analysis}. 
Finally, the agent evaluates the trained model over its test data (line 9) and sends the evaluation metrics, its encrypted model parameters $\theta_a$, and a zero knowledge proof $Z_a$ of $\theta_a$ to the SCC (lines 10 and 11).

\begin{algorithm}[!t]
  \caption{Agent $a$ Training Routine}
  \label{alg:agent}
  \KwIn{$D_a$: The agent training data,\\
        \hspace{28pt}$\eta$: The learning rate\\
        \hspace{28pt}$b$: The mini-batch size
        }
  Register $a$ with the PT-DL SCC \\
  \Repeat{}{
    \If{agent $a'$ requests $\theta_a$}{
      encrypt $\theta_a$ and share it with $a'$ on the IPFS
    }
    \Upon{receiving $\theta$ (as $\theta_a)$ from agent $a'$}{
      decrypt $\theta_a$ and initialize model $\cM^a_{\theta_a}$\\
      \For{mini-batch $B$ of size $b$ in $D_a$} {
        $\displaystyle 
          \theta_a \gets \theta_a - \eta \;\;\nicefrac{1}{b}\!\!\!\!\!\!
          \sum_{(X_i, Y_i)\in B}\!\!\!\!
          \nabla^c_{\theta_a} \left[\cL(\cM^a_{\theta_a}(X_i), Y_i)\right] + \xi$\\
          \nonl
          \hspace{20pt}with $\xi \sim \cN(0, \sigma^2 \bm{I})$ as defined in Def.~\ref{def:SGM}\\
      }
      $\phi_a \gets \textsc{Eval}(\theta_a)$ over test data\\
      Generate Zero knowledge proof $Z_a$\\
      \textbf{send} encrypted $(\theta_a, \phi_a, Z_a)$ on the IPFS\\
    }
  }
\end{algorithm}

\subsection{Trustability and Incentives}
\label{sub:incentives_and_trustability}

PT-DL agents are required to assess other agents' contribution at each 
round of the distributed training process. The agents, which agree on 
an evaluation metric to adopt, are encouraged to submit highly accurate
models. As reviewed in Section \ref{sub:APDL_smart_contract}, a model 
update is accepted if the agent evaluation of that model does not 
deviate too much from the median evaluation score of a subsample $A_S$ of other agents and if the median score is also not much worse than the evaluation of the current global model. 
Submitting a low quality model would result in a low median score, 
while submitting an incorrect quality assessment would result in a high deviation from the median quality analysis. Both cases would result in rejecting the model update. 
Agents associated to valid (malicious) models can be rewarded (made trustable) through the use of the zero knowledge proof. 
Storing the zero knowledge proof on-chain allows the progress made by an agent to be verified by other agents without revealing any data. The process can be used to disincentive agents publishing poor model updates. A formal analysis on the trustability guarantee is given in Section \ref{sec:privacy_analysis}.

\subsection{PT-DL Communication Topologies}
\label{sec:topologies}

\begin{figure}[!t]
\centering
\includegraphics[width=0.8\columnwidth]{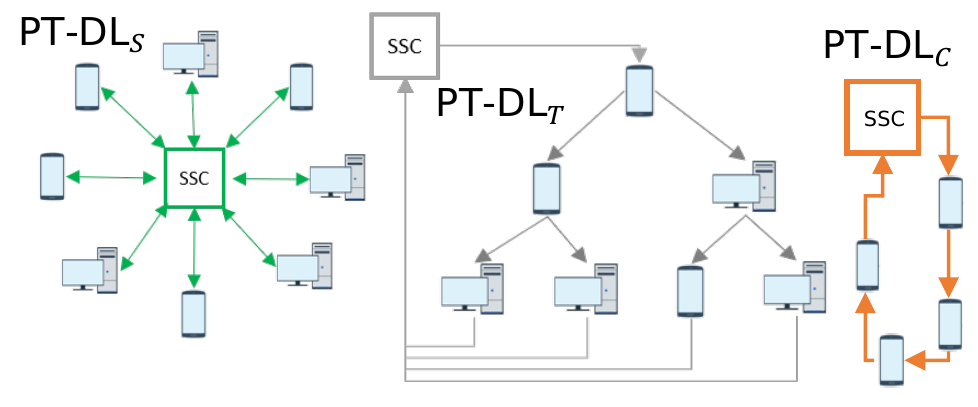} 
\caption{Conceptual Topological Diagram}
\label{fig-topos}
\end{figure}

In addition to privacy and trustability, a further aspect that 
challenges the development of systems that solve problem 
\eqref{eq:ERM} are the frequency of communication and model aggregations.
The former is a bottleneck in distributed training due to the large size of the parameters being exchanged.
The latter, as shown in the experimental analysis, may be deleterious to the model accuracy.
This section proposes several agent communication topologies 
that can ensure a contained communication cost and model aggregations.



The classical topology used in distributed learning is a star topology (see Figure \ref{fig-topos} (\star)), where, at each round all agents send their updates to the SCC, which, in turn, aggregates them. This topology allows the maximal level of parallelism, as all agents can perform local updates at the same time. 
However, the resulting aggregations may decrease the quality of the overall model as well as increase the network bandwidth utilization. 
Two further topologies adopted in this work are \tree{}, 
based on a tree decomposition, and \chain{}, that uses a chain, as illustrated conceptually in Figure \ref{fig-topos}. 
These topologies are encoded in the agent ordering $\prec$ given as input to Algorithm \ref{alg:scc},
The tree decomposition ensures that the aggregation is to be performed only by the leaf agents, while the chain topology requires no aggregations. 

While the number of aggregation operations and the network bandwidth reduces as the diameter of the graph induced by the agent communication topology increases, the algorithm runtime increases, since the number of parallel operations that can be performed decreases (logarithmically for the tree topology and linearly for the chain one) when compared to the start topology. This trade-off is the subject of study in Section \ref{sec:experimental_analysis}. 
To quantify this degree of parallelism the paper introduces the following terminology.
\begin{definition}[ACR]
\label{def2}
An Asynchronous Communication Round (ACR) is defined as the total number of parallel data exchanges amongst the agents and the SCC to complete an algorithmic round. 
\end{definition}
Notice that the chain topology requires $K$ exchanges, the tree topology requires $\log(K)$ exchanges, while the star topology requires a single parallel data exchange to complete a round. 

\section{Trustability and Privacy Analysis}
\label{sec:privacy_analysis}

To detect malicious updates, PT-DL requires each agent $a$ to report its accuracy metrics $\phi_a$ and compare it against the median metrics $\bm{\bar{\phi}}_{A_S}$ assessed by other agents in $A_S \subseteq [K]$. 
The trustability analysis relies on bounding the distance from the (possibly corrupted) median value $\bm{\bar{\phi}}_{A_S}$ to the real (not corrupted) mean value of $\bm{\phi}_{A_S}$, denoted here with $\mu \in \mathbb{R}$.
The analysis  assumes that the evaluation returned by the subsample $A_S$ of the selected agents follows a Gaussian distribution with bounded variance $\sigma$. 
Let $\gamma < \nicefrac{1}{2}$ and $A_S = A_S^+ \cup A_S^-$ be the 
set of $m$ agents reporting an evaluation for the accuracy score of parameters $\theta_a$, where $A_S^+$ denotes the benign agents and $A_S^-$ the malicious agents. The analysis assumes that the points in $A_S^+$ are i.i.d.~from $\cN(\mu, \sigma^2)$ and $|A_S^-| < \gamma\,m$.

\begin{theorem}
\label{thm:trustability}
Let $t \geq \Phi^{-1}(\nicefrac{1}{2} + \beta)$, 
where 
$\Phi(t) = \Pr[X \leq t]$ 
is the cdf of the 
standard Normal distribution. Then the probability that the PT-DL median estimator $\bm{\bar{\phi}}_{A_S}$ differs from the real mean value $\mu$ is bounded by:
\begin{equation}
\label{eq:trustability}
\Pr\left[ | \bm{\bar{\phi}}_{A_S} - \mu | > t \sigma \right]
\leq
2 \exp\left(-2m\left( \Phi(t) - \nicefrac{\beta}{2}\right)^2\right).
\end{equation}
\end{theorem}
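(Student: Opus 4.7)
The plan is to reduce the two-sided deviation event for the sample median to a one-sided binomial tail and then invoke Hoeffding's inequality. By the symmetry of $\cN(\mu,\sigma^2)$ around $\mu$, the probabilities $\Pr[\bm{\bar{\phi}}_{A_S} > \mu + t\sigma]$ and $\Pr[\bm{\bar{\phi}}_{A_S} < \mu - t\sigma]$ admit identical bounds, so a union bound over the two tails will recover the leading factor of $2$ in the conclusion. I would therefore focus first on the upper tail.

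The key structural observation is that the sample median of $m$ real numbers exceeds $\mu + t\sigma$ only if at least $\lceil m/2 \rceil$ of them do. Because a malicious agent may report an arbitrary value, the worst adversarial behaviour places every one of the (at most $\gamma m$) malicious reports strictly above the threshold; this only \emph{reduces} the number of benign reports that must themselves exceed the threshold in order to push the median above it. Writing $n_+ = |A_S^+|$ and $n_- = |A_S^-|$, the event $\{\bm{\bar{\phi}}_{A_S} > \mu + t\sigma\}$ is therefore contained in $\{B \geq m/2 - n_-\}$, where $B$ counts how many of the $n_+$ i.i.d.\ benign draws strictly exceed $\mu + t\sigma$.

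Because the benign samples are i.i.d.\ $\cN(\mu,\sigma^2)$, $B$ is Binomial with parameters $n_+$ and $1-\Phi(t)$ and mean $n_+(1-\Phi(t))$. Using the hypothesis $\Phi(t) \geq \tfrac{1}{2}+\beta$ together with $n_- < \gamma m$, one shows that $m/2 - n_-$ sits above the mean of $B$ by a gap that can be expressed, after using $n_+ \leq m$ to relax the denominator of the Hoeffding exponent, as a quantity of order $m\,(\Phi(t)-\beta/2)$. Applying Hoeffding's inequality to $B$ then delivers the upper-tail bound $\exp\!\left(-2m(\Phi(t)-\beta/2)^2\right)$, and doubling via the symmetric lower-tail argument yields the stated inequality.

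The step I expect to require the most care is arranging the algebra of the Hoeffding gap so that it collapses cleanly into the expression $\Phi(t)-\beta/2$ appearing in the theorem. A crude bookkeeping substituting the worst-case $n_+ = (1-\gamma)m$ and $n_- = \gamma m$ produces a gap of the form $m\bigl[(1-\gamma)\Phi(t) - \tfrac{1}{2}\bigr]$, and reshaping this into the stated $m(\Phi(t)-\beta/2)$ hinges on the precise coupling between the adversarial fraction $\gamma$ and the tail-threshold parameter $\beta$ (both of which are implicitly linked so that the gap is nonnegative). Once this identification is fixed, the Gaussian assumption enters only through $\Pr[X_i > \mu + t\sigma] = 1 - \Phi(t)$, and the remainder of the argument is a routine application of Hoeffding's inequality followed by the symmetric union bound.
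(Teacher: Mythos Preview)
Your approach is essentially the same as the paper's: reduce the two-sided deviation to a one-sided tail by symmetry, observe that the median can exceed $\mu+t\sigma$ only if enough of the benign reports do (the paper phrases this as ``the median of $A_S$ is at most the $(\tfrac12+\beta)$-quantile of $A_S^+$''), introduce Bernoulli indicators $Y_i=\mathbf{1}\{X_i>\mu+t\sigma\}$ with mean $1-\Phi(t)$, and apply Hoeffding. The paper silently identifies $\gamma$ with $\beta$ (writing $|A_S^+|>(1-\beta)m$), which is exactly the coupling you flagged as the delicate point; once that identification is made, the paper substitutes $s=\Phi(t)-\beta/2$ into the Hoeffding exponent and stops, without spelling out the intermediate algebra any more carefully than you do.
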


\begin{proof}
By scaling, and w.l.o.g.~assume $\sigma=1$ so that scores in $A_S^+$ are i.i.d.~from $\cN(\mu, 1)$ and the RHS of \eqref{eq:trustability} becomes $2 \exp\left(-2m\left( \Phi(t) - \nicefrac{1}{2}-\beta\right)^2\right).$ 
By assumption, $|A_S^+| > (1-\beta)m$ and thus the median of $A_S$ is at most the $(\frac{1}{2}+\beta)$-quantile of $A_S^+$, since $A_S^-$ contains only an $\beta$-fraction of all evaluations. It thus suffices to show that the $(\frac{1}{2}-\beta)$-quantile of $A_S^+$ is not too large. \\
For each $\phi_i \in A_S^+$, let $Y_i$ be the $\{0,1\}$-valued random variable which is $1$ if $X_i - \mu > t$ and $0$ otherwise. $Y_i$ follows an i.i.d.~Bernulli random process and 
$$
  \mathbb{E}[Y_i] = \Phi(-t) = 1 - \Phi(t).
$$
Moreover, the $(\frac{1}{2}-\beta)$-quantile of $A_S^+$ exceeds $\mu+t$ if $\frac{1}{2} \sum_{\phi_i \in A_S^+} Y_i > \frac{1}{2} - \beta$. Using a Chernoff bound, it follows that:
$$
  \Pr\left[\frac{1}{m}\sum_{\phi_i \in A_S^+} Y_i > 1 - \Phi(t) + s \right] \leq \exp(-2 m s^2).
$$
Since, $s >0$, by substituting $s$ with $\Phi(t)-\nicefrac{\beta}{2}$ gives the claim.
\end{proof}

The privacy analysis of PT-DL relies on the moment accountant for Sampled 
Gaussian (SG) mechanism \cite{mironov2019rnyi}, whose privacy is analyzed 
using \emph{R\'{e}nyi Differential Privacy} (RDP) \cite{Mironov_2017}. 

\begin{definition}
\label{def:SGM}
[Sampled Gaussian Mechanism]
Let $f:S\subseteq D \to \RR^d$ be a function mapping subsets $S$ of the input data $D$ to $\RR^d$. 
The Sampled Gaussian (SG) mechanism with sampling rate $0 < q \leq 1 $ and standard deviation $\sigma > 0$ is defined as follows:
{
\begin{align*}
SG_{q,\sigma}(D) \triangleq 
f(\{x: x \in D \  \mbox{is sampled with probability q}  \})
+ \cN(0, \sigma^2 \bm{I}),
\end{align*}
}
where each element of $D$ is sampled independently at random without replacement with probability $q$, and $\cN(0, \sigma^2 \bm{I})$ is the spherical $d$-dimensional Gaussian noise with per-coordinate variance $\sigma^2$.
\end{definition}

\begin{theorem}(($\alpha,\epsilon$)-RDP)
\label{thm:RDP}
A mechanism $f\!:\!\cD \!\to\! \cR$ with domain $\cD$ and range $\cR$ is said to have $\epsilon$-R\'{e}nyi differential privacy of order $\alpha$, or ($\alpha,\epsilon$)-RDP for short, if for any adjacent $D,D' \in \cD$ it holds that
\[
    \cD_\alpha(f(D) \parallel f(D')) \leq \epsilon,
\]
where 
$\cD_{\alpha} (P \parallel Q) \triangleq \frac{1}{1-\alpha} 
\log \EE_{x \sim Q} \left( \frac{P(x)}{Q(x)}\right)^{\alpha}$ is the R\'{e}nyi divergence of order $\alpha$ between two probability distributions $P$ and $Q$. 
\end{theorem}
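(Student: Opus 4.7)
The final statement labeled as Theorem \ref{thm:RDP} is, strictly speaking, a definition rather than a claim requiring proof: it names the condition $\cD_\alpha(f(D) \parallel f(D')) \leq \epsilon$ as $(\alpha, \epsilon)$-RDP. Consequently, my plan is not to "prove" a nontrivial implication, but rather to verify that the construction is well-posed and to record the few elementary facts that justify calling this a privacy notion (non-negativity of $\epsilon$, reduction to the familiar KL-based guarantee, and compatibility with the Gaussian-based Sampled Gaussian Mechanism of Definition~\ref{def:SGM}).

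First, I would verify well-definedness of $\cD_\alpha(P \parallel Q)$ for $\alpha > 1$. Provided $P$ is absolutely continuous with respect to $Q$ (the usual technical assumption), the expectation $\EE_{x \sim Q}[(P(x)/Q(x))^\alpha]$ is a well-defined element of $[1, \infty]$. The lower bound of $1$ follows from Jensen's inequality applied to the convex map $t \mapsto t^\alpha$:
\[
\EE_{x \sim Q}\!\left[\left(\tfrac{P(x)}{Q(x)}\right)^\alpha\right]
\;\geq\;
\left(\EE_{x \sim Q}\!\left[\tfrac{P(x)}{Q(x)}\right]\right)^{\!\alpha}
\;=\; 1,
\]
so the logarithm is non-negative and, after division by $\alpha - 1 > 0$, yields $\cD_\alpha(P\parallel Q) \geq 0$. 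This forces $\epsilon \geq 0$ in the RDP condition, matching the role of $\epsilon$ as a privacy loss.

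Second, I would record two standard continuity/monotonicity observations that anchor the definition to the existing literature: (i) as $\alpha \to 1^+$, $\cD_\alpha(P \parallel Q)$ reduces to the KL divergence $\mathrm{KL}(P \parallel Q)$ via a direct application of L'H\^opital's rule, recovering the information-theoretic interpretation of differential privacy; and (ii) $\cD_\alpha(P \parallel Q)$ is non-decreasing in $\alpha$, so $(\alpha, \epsilon)$-RDP becomes a stronger requirement at larger orders. Both facts are one-line calculations from the definition of R\'enyi divergence.

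The only genuinely delicate point, and the one I would flag as the main "obstacle," is handling the boundary cases where $P$ and $Q$ share non-overlapping support: by the standard convention, $\cD_\alpha(P \parallel Q) = \infty$ whenever there exists $x$ with $Q(x) = 0$ and $P(x) > 0$, while contributions where $P(x) = 0$ are treated as $0$. For the Gaussian-based instantiation used throughout PT-DL (Definition~\ref{def:SGM}), these pathologies do not arise because Gaussian densities are strictly positive on $\RR^d$, so $\cD_\alpha$ is finite and admits the closed-form bounds needed in the subsequent privacy accounting of the PT-DL training procedure. No further argument is necessary to justify the statement as written.
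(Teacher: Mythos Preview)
You are correct that the statement is a definition rather than a theorem requiring proof, and the paper treats it exactly this way: no proof is given for Theorem~\ref{thm:RDP}, which simply records the notion of $(\alpha,\epsilon)$-RDP from \cite{Mironov_2017}. Your additional well-posedness checks (non-negativity via Jensen, the $\alpha\to 1^+$ limit to KL, monotonicity in $\alpha$, and the support-overlap caveat) are all sound and standard, but they go beyond anything the paper itself supplies; one small caution is that the paper's displayed formula uses the prefactor $\tfrac{1}{1-\alpha}$ rather than the standard $\tfrac{1}{\alpha-1}$, so your non-negativity argument implicitly corrects a sign typo in the statement.
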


The privacy analysis of the SG mechanism is described by the following Theorem from \citet{mironov2019rnyi}.
\begin{theorem}
\label{thm:RDP_sampling}
For a function $f\!:\! D \!\to\! \RR^d$, such that for all $D\sim D'$, $\|f(D) - f(D')\|_2 \leq 1$, the SG mechanism $SG_{q,\sigma}$ with sampling ratio $q$ and standard deviation $\sigma$ satisfies $(\alpha, \epsilon)$-RDP with:
{
\begin{subequations}
 \label{eq_app:RDP}
    \begin{align}
    \epsilon  & \leq \cD_{\alpha} 
        \left[ \cN (0, \sigma^2 ) \;\parallel\; (1-q) \cN (0, \sigma^2 ) + q \cN(1, \sigma^2 )  
        \right] \label{eq_app:RDPa}\\
    \epsilon & \leq \cD_{\alpha} 
        \left[ (1-q) \cN (0, \sigma^2 ) 
                + q \cN (1, \sigma^2 ) \;\parallel\; 
                \mathcal{N} (0, \sigma^2 ) 
        \right]. \label{eq_app:RDPb}
    \end{align}
\end{subequations}
}
\end{theorem}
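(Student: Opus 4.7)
The plan is to establish each of the two RDP bounds by (i) reducing the $d$-dimensional problem to a one-dimensional comparison through the rotational symmetry of the spherical Gaussian noise, (ii) exploiting the natural coupling between the Poisson subsamples of adjacent datasets to expose a two-Gaussian mixture structure, and (iii) identifying which of the two adjacency orientations --- addition versus removal of a record --- produces each of the two stated bounds.

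First I would fix adjacent datasets $D \sim D'$ and, without loss of generality, write $D' = D \cup \{x\}$; the other orientation is symmetric. By coupling the Poisson subsamples $S$ and $S'$ drawn from $D$ and $D'$ so that they agree on $D$ and differ only by the inclusion of $x$ (which happens with probability $q$), one obtains the conditional representation $SG_{q,\sigma}(D) \mid S \sim \cN(f(S), \sigma^2 \bm{I})$, while $SG_{q,\sigma}(D') \mid S$ is the two-component mixture $(1-q)\,\cN(f(S), \sigma^2 \bm{I}) + q\,\cN(f(S \cup \{x\}), \sigma^2 \bm{I})$. Since the sensitivity hypothesis gives $\|f(S \cup \{x\}) - f(S)\|_2 \leq 1$, I would apply an orthogonal change of coordinates (which leaves the spherical noise invariant) aligning the shift with the first basis vector, and use post-processing to project onto this direction. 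The remaining coordinates contribute nothing to the divergence because their marginals coincide, so the problem collapses to the one-dimensional Rényi divergence $\cD_\alpha\!\left(\cN(0,\sigma^2)\,\parallel\, (1-q)\,\cN(0,\sigma^2) + q\,\cN(c,\sigma^2)\right)$ with $c \leq 1$. Monotonicity of Rényi divergence in the mean gap lets me push $c$ up to $1$, yielding the first stated bound. Reversing the adjacency orientation to $D = D' \cup \{x\}$ leads to the mirror-image mixture and produces the second bound.

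The main obstacle is passing from the conditional divergence given the shared subsample $S$ to the unconditional divergence over the randomness of $S$. Unlike KL divergence, Rényi divergence does not obey a clean chain rule; only the weaker joint quasi-convexity is available. To close this gap I would exploit two facts specific to the SG mechanism: the additive noise is identical (up to a mean shift) in the two coupled output distributions, and the mean shift is uniformly bounded in $S$ by the sensitivity. This lets one upper-bound the marginal divergence by its value at the extremal configuration where $\|f(S\cup\{x\}) - f(S)\|_2 = 1$, which is exactly the canonical one-dimensional mixture comparison. Formally verifying this worst-case reduction --- essentially showing that mixing over $S$ cannot increase the divergence beyond the worst single-$S$ value --- is the technical heart of the argument and is where the specific Gaussian structure, rather than generic subsampling amplification, must be used.

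Finally, once the reduction is complete, the two right-hand sides of the theorem are exactly what is stated; I would leave them in that form since the one-dimensional Rényi divergences of a Gaussian against a two-component Gaussian mixture admit no clean closed form for general $\alpha$ but are readily evaluated numerically, which suffices for the subsequent moments-accountant bookkeeping used throughout the PT-DL analysis.
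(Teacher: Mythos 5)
First, a point of reference: the paper does not prove this theorem at all --- it is imported verbatim from \citet{mironov2019rnyi} (``The privacy analysis of the SG mechanism is described by the following Theorem from...''), so there is no in-paper proof to compare against, and your attempt has to stand on its own as a reconstruction of that reference's argument.

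Your setup (couple the Poisson subsamples of $D$ and $D'=D\cup\{x\}$ on their common part, expose the two-component mixture, rotate so the shift lies along one coordinate, factor out the orthogonal coordinates) is the right skeleton, but the proof has a genuine gap at exactly the place you yourself flag as ``the technical heart'': you never actually pass from the conditional divergence given $S$ to the divergence of the unconditional mixtures, and your remark that ``only the weaker joint quasi-convexity is available'' is backwards --- joint quasi-convexity is precisely the tool that closes this gap, with no appeal to special Gaussian structure. Writing $SG_{q,\sigma}(D)=\sum_S p_S P_S$ with $P_S=\cN(f(S),\sigma^2\bm{I})$ and $SG_{q,\sigma}(D')=\sum_S p_S Q_S$ with $Q_S=(1-q)\cN(f(S),\sigma^2\bm{I})+q\,\cN(f(S\cup\{x\}),\sigma^2\bm{I})$, where the \emph{same} weights $p_S$ appear on both sides, joint quasi-convexity of R\'enyi divergence (van Erven--Harrem\"oes, valid for every order $\alpha$) yields $\cD_\alpha\bigl(\sum_S p_S P_S \,\|\, \sum_S p_S Q_S\bigr)\le \max_S \cD_\alpha(P_S\|Q_S)$, which is exactly the missing step; the rest of your reduction then applies to each fixed $S$. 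Separately, your ``monotonicity in the mean gap'' is asserted rather than proved, and it does need an argument: for instance, the randomized post-processing $z\mapsto cz+\cN(0,(1-c^2)\sigma^2)$ maps the pair at shift $1$ to the pair at shift $c\le 1$ (the variance is preserved), so the data-processing inequality for $\cD_\alpha$ gives the claim. With those two pieces made explicit --- and the symmetric treatment of the removal orientation giving \eqref{eq_app:RDPb} --- your outline becomes a complete proof; as written, it identifies the hard step but leaves it open.
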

The R\'{e}nyi divergences appearing in Equations \eqref{eq_app:RDPa}
and \eqref{eq_app:RDPb} can be computed numerically according to the
procedure described in \cite{mironov2019rnyi}. 
The final privacy loss in the $(\epsilon, \delta)$-differential
privacy model is obtained by observing that a mechanism satisfying
$(\alpha, \epsilon)$-R\'{e}nyi differential privacy also satisfies
$(\epsilon + \frac{\log \nicefrac{1}{\delta}}{\alpha-1},
\delta)$-differential privacy, for any $0< \delta < 1$
\cite{Mironov_2017}.

\begin{theorem}
\label{thm:privacy_loss}
Each agent update is differentially private. Additionally, denote $\epsilon_a, \ a \in [K]$ the per-round privacy loss of a PT-DL agent $a$. Regardless of the topology adopted, the per-round privacy loss of the PT-DL framework is: $\epsilon = \max_{a \in [K]} \epsilon_a$.
\end{theorem}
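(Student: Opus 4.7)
The plan is to decompose the statement into two claims: (i) each individual agent update satisfies differential privacy, and (ii) combining the updates across all agents within one round yields a privacy loss equal to the maximum per-agent loss, independent of how the agents are wired together.

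For claim (i), I would appeal directly to the machinery already stated. Line 8 of Algorithm~\ref{alg:agent} is exactly the DP-SGD update: gradients are clipped to an $L_2$-norm of $c$, averaged over a mini-batch sampled from $D_a$, and perturbed by spherical Gaussian noise $\cN(0,\sigma^2\bm I)$. After rescaling by $c$, this fits Definition~\ref{def:SGM} with sampling rate $q=b/n_a$, so Theorem~\ref{thm:RDP_sampling} gives $(\alpha,\epsilon(\alpha))$-RDP per mini-batch step. Composing over all mini-batch steps taken by agent $a$ in a single round via RDP composition yields a per-round $(\alpha,\epsilon_a(\alpha))$-RDP guarantee; converting back using $(\alpha,\epsilon)$-RDP $\Rightarrow (\epsilon + \log(1/\delta)/(\alpha-1),\delta)$-DP then gives a per-round $(\epsilon_a,\delta)$-DP bound for agent $a$'s local computation on $D_a$.

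For claim (ii), the crucial structural fact is that the agents' datasets $\{D_a\}_{a\in[K]}$ are pairwise disjoint, as stated in Section~\ref{sec:problem_definition}. I would argue by fixing an arbitrary individual record and tracking which agents' DP analyses must account for it. Since that record belongs to exactly one $D_a$, only agent $a$'s noise injection protects it; the computations performed by every other agent $a'\neq a$ are functions of $D_{a'}$ and of previously transmitted parameters $\theta_{a''}$, so by parallel composition over disjoint inputs the privacy loss with respect to that record is precisely $\epsilon_a$. Taking the worst case over records (equivalently, over $a$) gives the overall per-round bound $\epsilon=\max_{a\in[K]}\epsilon_a$.

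The topology-independence then falls out by post-processing immunity. In the chain or tree topologies, an agent $a'$ downstream of $a$ initializes from $\theta_a$ before running its own DP-SGD on $D_{a'}$. But $\theta_a$ is already $(\epsilon_a,\delta)$-DP with respect to $D_a$, and $a'$'s subsequent operations use only $D_{a'}$ (disjoint from $D_a$) and data-independent randomness, hence qualify as post-processing from the perspective of any record in $D_a$; they inflate neither $\epsilon_a$ nor the loss for any record in $D_{a'}$ beyond $\epsilon_{a'}$. Aggregation at the SCC is likewise post-processing. The only subtlety, and the step I expect to require the most care, is confirming that the verification/branching in Algorithm~\ref{alg:scc} (the accept/reject decision on $\theta_a$ based on $\bm{\bar\phi}_{A_S}$) does not leak extra information about any individual's data: since $\phi_{a'}$ is computed from post-processed $\theta_a$ together with $D_{a'}$, and $D_{a'}$'s contribution is itself already accounted for in $\epsilon_{a'}$, the composed view still respects the per-record bound $\max_a \epsilon_a$. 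This closes the argument regardless of the topology $\prec$.
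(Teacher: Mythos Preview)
Your proposal is correct and follows essentially the same approach as the paper: invoke the Sampled Gaussian mechanism for the per-agent guarantee (line~8 of Algorithm~\ref{alg:agent}), then apply parallel composition over the disjoint datasets $\{D_a\}_{a\in[K]}$ to obtain $\epsilon=\max_a\epsilon_a$. The paper's proof is a two-sentence version of exactly this; your additional post-processing discussion for topology independence and the verification step is more careful than what the paper provides, though your claim that the non-noised evaluations $\phi_{a'}=\textsc{eval}_{a'}(\theta_a)$ on test data are ``already accounted for in $\epsilon_{a'}$'' is not quite right (those accesses to $D_{a'}$ are unnoised and distinct from the DP-SGD steps), a subtlety the paper simply does not address.
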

\begin{proof}
  For an agent $a \in [K]$, the differential privacy of the computations of the model parameters $\theta_a$ follows from the application of the SG mechanism in line (8) of Algorithm \ref{alg:agent}. 
  Next, since each agent holds a non-overlapping subsets of datasets $D$, by parallel composition of DP, the total privacy loss is $\max_{a \in [K]} \epsilon_a$.
\end{proof}

\begin{theorem}
\label{thm:acr_vs_privacy}
  Let PT-DL be executed for $N$ asynchronous communication rounds (ACRs). Then, the total privacy budget loss for topologies chain, tree, and star is, respectively: $\frac{N \epsilon}{K},\frac{N \epsilon}{\log (K)}$, and $N\epsilon.$
\end{theorem}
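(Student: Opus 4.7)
The plan is to combine Theorem~\ref{thm:privacy_loss} (per-round parallel composition across agents) with sequential composition across rounds, and then convert between algorithmic rounds and asynchronous communication rounds using Definition~\ref{def2}. By Theorem~\ref{thm:privacy_loss}, a single algorithmic round of PT-DL (regardless of topology) leaks at most $\epsilon = \max_{a\in[K]} \epsilon_a$ privacy budget, because agents hold disjoint data and parallel composition applies. Hence if $R(N, \cdot)$ denotes the number of algorithmic rounds that can be completed within $N$ ACRs under a given topology, then by sequential composition of differential privacy the total privacy loss is at most $R(N, \cdot)\cdot \epsilon$.

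The second step is to read off $R(N, \cdot)$ from the ACR counts established immediately after Definition~\ref{def2}. Specifically, the chain topology requires $K$ parallel exchanges per algorithmic round, the tree topology requires $\log(K)$, and the star topology requires a single exchange. Therefore $R(N, \text{chain}) = N/K$, $R(N, \text{tree}) = N/\log(K)$, and $R(N, \text{star}) = N$. Substituting into the sequential composition bound yields the three claimed quantities $\frac{N\epsilon}{K}$, $\frac{N\epsilon}{\log(K)}$, and $N\epsilon$ respectively.

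To make the composition step tight it is cleanest to work in the RDP framework underlying Theorem~\ref{thm:RDP_sampling}: RDP composes by simple addition of the $\epsilon$ parameters at a fixed order $\alpha$, and the conversion to $(\epsilon, \delta)$-DP is applied only once at the end; thus after $R$ algorithmic rounds the per-agent RDP loss is $R\,\epsilon_a^{\text{RDP}}$, and taking the maximum over agents before converting recovers the stated bound. The only subtle point, and the main obstacle worth flagging, is to ensure that the parallel-composition step from Theorem~\ref{thm:privacy_loss} is applied \emph{within} each algorithmic round rather than across ACRs. The reason this matters is that successor agents in the chain and tree topologies receive parameters trained on another agent's private data; by post-processing immunity the update they broadcast still protects that other agent's data with the same $\epsilon_a$, so one must not double-count the privacy loss of predecessor agents across intra-round exchanges. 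Once this bookkeeping is clarified, each algorithmic round contributes exactly $\epsilon$ regardless of how many intra-round ACRs it spans, and the arithmetic above completes the proof.
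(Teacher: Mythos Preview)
Your proposal is correct and follows essentially the same approach as the paper: invoke the per-round privacy bound of Theorem~\ref{thm:privacy_loss}, apply sequential composition over algorithmic rounds, and convert $N$ ACRs into the corresponding number of rounds via the exchange counts ($K$, $\log K$, and $1$) from Definition~\ref{def2}. The paper's own argument is a single sentence stating exactly this; your additional remarks on RDP additivity and post-processing immunity for successor agents are sound elaborations that the paper leaves implicit.
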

The result follows directly from the application of the sequential composition and observing that the PT-DL with chain topology requires $K$ ACRs to complete 1 round, tree requires $log(K)$ ACRs, and star requires a single ACR.

Theorem \ref{thm:acr_vs_privacy} shows that under a fixed communication cost constraint, PT-DL on a chain topology incurs less privacy loss than on a tree topology, and in turn less privacy loss than on a star topology.

\section{Experimental Analysis}
\label{sec:experimental_analysis}

\textbf{Datasets, Models, and Metrics} This section studies the behavior of the proposed PT-DL architectures on three classification tasks:
(1) \textit{MNIST} and (2) \textit{Fashion MNIST} comprises of 0 to 9 handwritten digits and articles images, respectively. These datasets have a training set of 60,000 examples and a test set of 10,000 examples. Each example is a 28x28 grayscale image, associated with a label from 10 classes. The task is to correctly classify the class associated with an image. 
(3) \textit{COVID-19 Chest X-Ray} is the first public COVID-19 CXR image data collection \cite{cohen2020covid,cohen2020covidProspective}, currently totaling 900 frontal chest X-ray images from over 26 countries. We combine it with Healthy Chest X-Ray images \cite{KERMANY20181122} to create a dataset containing, in total, 1800 data samples. The task is to correctly classify from an XRAY image whether a person is COVID-19 positive. 
 
The experiments consider a centralized neural network with 6 hidden layers as baseline with a total of 1,199,882 trainable parameters, and denoted SGD. Since this is a centralized model training on the whole dataset $D$, it is a representative of an upper bound on the task accuracy. 
The experiments further consider three PT-DL topologies, as described in Section \ref{sec:topologies}: a chain topology, denoted \chain, a binary tree, denoted \tree, and a star topology, denoted \star{} and compare them against FedAvg. When not otherwise specified, the experiments use 100 agents, each with 600 training and 100 testing examples. For the COVID-19 Chest X-Ray Dataset, each device has 7 samples of each class to train and 2 to test. All models are executed for $30$ runs (a.k.a., epochs) and the result report average and standard deviations of a 5-fold cross-validation.
A systematic model tuning for the proposed PT-DL was not investigated. 

The section first analyze the non-private versions of the models above an report their accuracy and scalability, their robustness to agents with highly biased datasets, to model inversion attacks, and to agent dropouts. Next, the section compares the differentially private version of the proposed models and of FedAvg to analyze the trade-off between privacy, accuracy, and communication costs.

\subsection{Accuracy and Scalability}
\begin{figure}[!t]
    \centering
    \includegraphics[width=\linewidth]{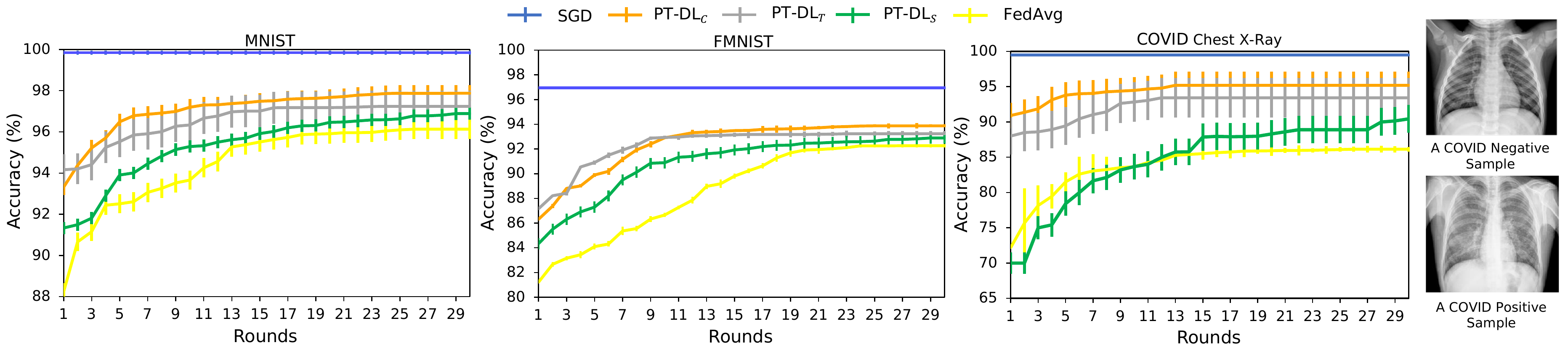}
    \caption{Algorithms accuracy per round on MNIST(left) and FMNIST(middle) and COVID-19 X-Ray(right) for $K=100$ agents.}
    \label{acc-results} 
\end{figure}

Figure~\ref{acc-results} reports a comparison of the accuracy of all models analyzed on the MNIST (right), FMNIST (center), and COVID-19 Chest X-Ray (left) datasets. 
Observe that the PT-DL models outperform FedAvg in all settings and datasets analyzed. The increase in accuracy is particularly notable for 
\chain{}, which performs consistently better than all other topologies analyzed. \tree{} obtains models that are typically dominated by \chain{}, but superior to \star{}. 
This result is explained by noticing that \star{} aggregates the parameters of all $K$ agents at each round. \tree, instead, aggregates only about $\nicefrac{K}{2}$ model parameters (i.e., those corresponding to the tree leaves). Finally, \chain{} does not perform any aggregation operation. By contrast, FedAvg aggregates the parameters of all $K$ agents multiple times per each round. 

\begin{figure}[!t]
    \centering
     \begin{subfigure}[c]{0.45\linewidth}
        \centering
         \includegraphics[width=\linewidth]{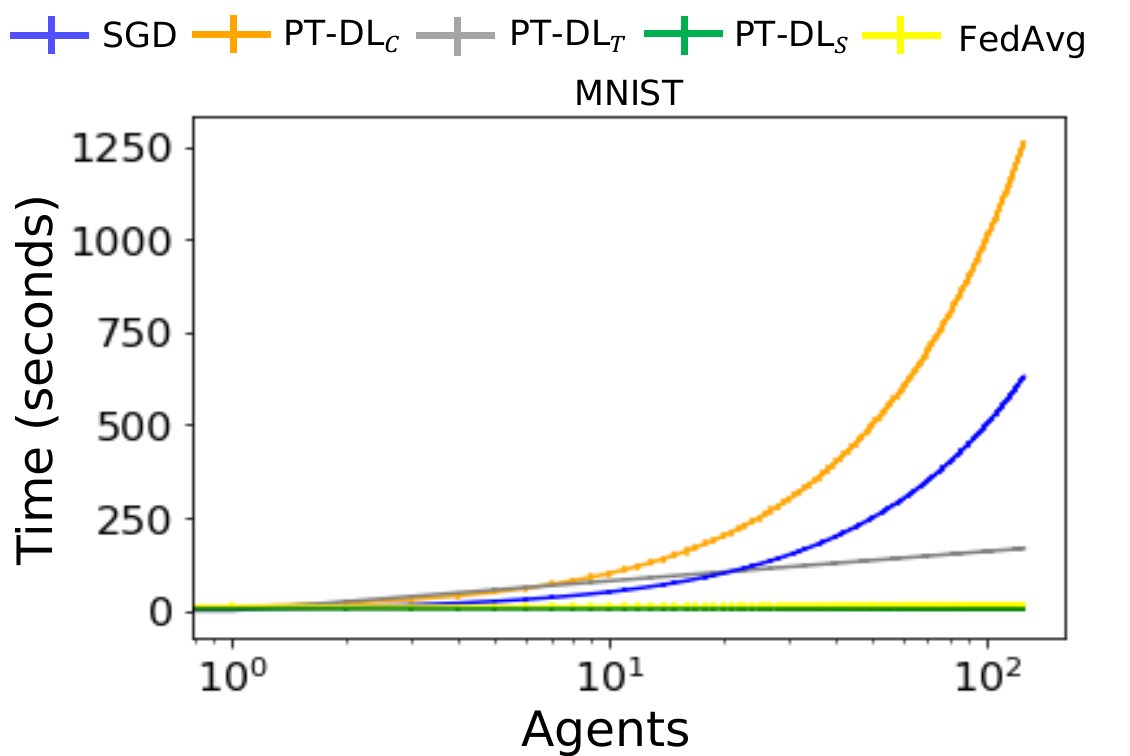}
         \caption{}
        \label{fig:time-bench}
     \end{subfigure}
     \begin{subtable}[c]{0.42\linewidth}
         \resizebox{\linewidth}{!}{%
         \begin{tabular}{l| l l l} 
         \toprule
         {Agents} & $10$ & $100$ & $1000$\\
         \midrule
         $\textrm{PT-DL}_C$ &  $ 0.102$ & $ 1.020$ & $ 10.20$\\ 
         $\textrm{PT-DL}_T$ &  $ 0.132$ & $ 1.510$ & $ 15.28$ \\
         $\textrm{PT-DL}_S$ &  $ 0.204$ & $ 2.040$ & $ 20.40$ \\ 
         FedAvg &              $ 0.600$ & $ 6.000$  & $ 60.00$ \\
         \bottomrule
        \end{tabular}
    }
    \caption{}
    \label{fig:network-bench}
    \end{subtable}
    \caption{Time (left) in seconds and network bandwidth (right) in GB, required to complete 1 round on MNIST data.
    }
    \label{fig:network-tests}
\end{figure}

Next, Figure~\ref{fig:time-bench} compares the time (in seconds) required by the algorithms to complete a single training round (i.e., it processes the updates of all the agents in the distributed network) at the increasing of the number of agents. The results are illustrated for the MNIST dataset, but they are consistent across all benchmarks.
The algorithms performance follows the following order, from faster to slower: \star, FedAvg, \tree, SGD, and \chain. 
The runtime is directly proportional with the level of parallelization brought by the different distributed topologies adopted. In particular, notice that both \tree, and \star, obtain an exponential speedup when compared with \chain and the centralized baseline. 
The result reveal a trade-off between computational runtime and accuracy of the PT-DL models, where \chain, the slowest protocol, is also the most accurate, while \star, the fastest protocol, is the least accurate. On the other hand, the \tree{} topology represents a good trade-off between accuracy and runtime. 
These observations demonstrate the practical benefits of the proposed framework.


Finally, Figure~\ref{fig:network-bench} compares the network bandwidth consumption among the algorithms analyzed at varying of the number of agents. The network bandwidth is measured in terms of total amount of data the agents send in Gigabytes. Once again, the results are illustrated for the MNIST dataset but are consistent throughout all datasets. 
The table illustrates a clear trend: The proposed PT-DL framework is able to compute models more efficiently than FedAvg as it requires less network bandwidth. Further, note that the bandwidth used is inversely proportional to the diameter of the communication graph induced by the PT-DL topologies, with \chain{} being the most efficient, followed by \tree, and finally \star.

\subsection{Robustness to Attacks and Biased Data}

\begin{figure}[tb] 
\centering
\includegraphics[width=0.8\linewidth]{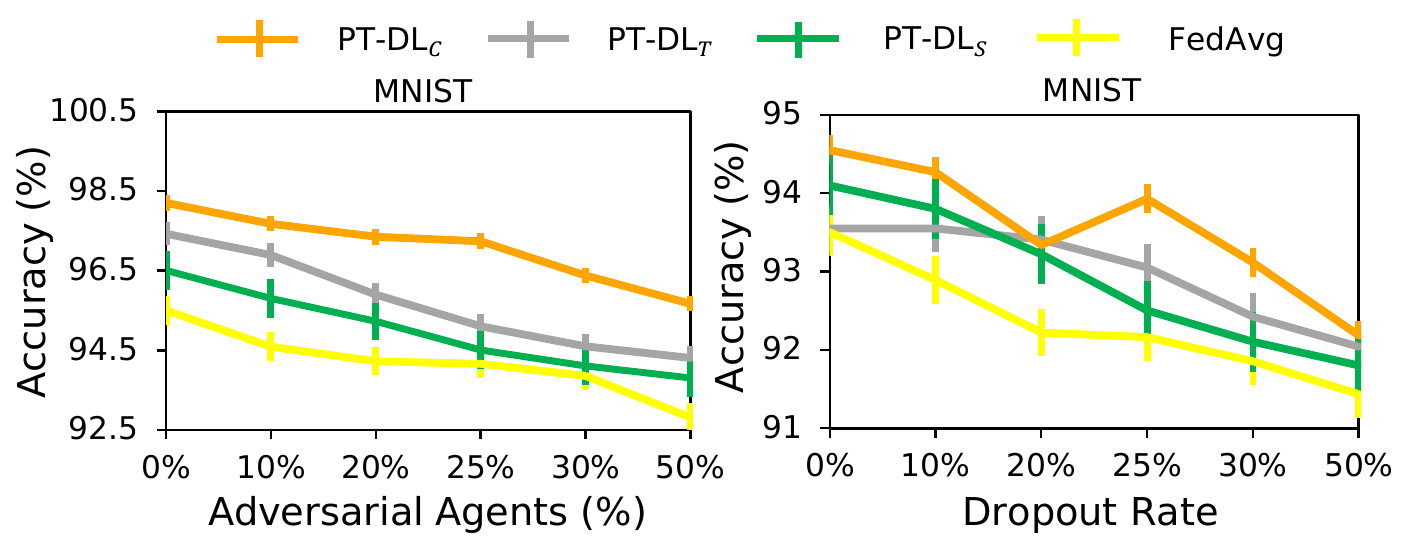}
\caption{Adversarial attacks (left) and dropout (right) resistance on
MNIST dataset and $K=100$ agents.} \label{fig-attack} 
\end{figure}

Next, the analysis focuses on the resilience of the decentralized algorithms
to adversarial attacks, agent dropouts, and to agents training over
highly biased data distributions.

In the first experiment, reported in Figure \ref{fig-attack} (left),
an increasing number of agents perform a model inversion attack
\cite{geiping2020inverting} prior sending their updates to the SCC or
to the centralized server (as in the case of FedAvg). For a fair
comparison, the experiments implement the same malicious agent detection scheme in FedAvg as that used by the proposed PT-DL with parameters $\kappa_1 = \kappa_2 = 0.05$, which rejects an update if the accuracy  discrepancy between the median evaluation and the reported one differs 
by more than 5\%. The experiments use $A_S = A$.
Observe that even with a significant portion (50\%) of the model
updates are being compromised, the global model trained by the
proposed framework sees only a small drop in accuracy. 

Next, Figure \ref{fig-attack} (right) reports the behavior of the
models as an increasing percentage of the agents drops out the
training process.  The trends are similar to those summarized above,
where only a small drop in accuracy is observed, even when half of the
agents drop out.  Once again, the results illustrate that the proposed
PT-DL framework outperforms the distributed baseline, FedAvg.
\emph{These trends are a clear indication towards the ability of the
proposed protocol to mitigate adversarial attacks as well as agents
drop out.}

\begin{figure}[tb]
\centering
\includegraphics[width=0.8\linewidth]{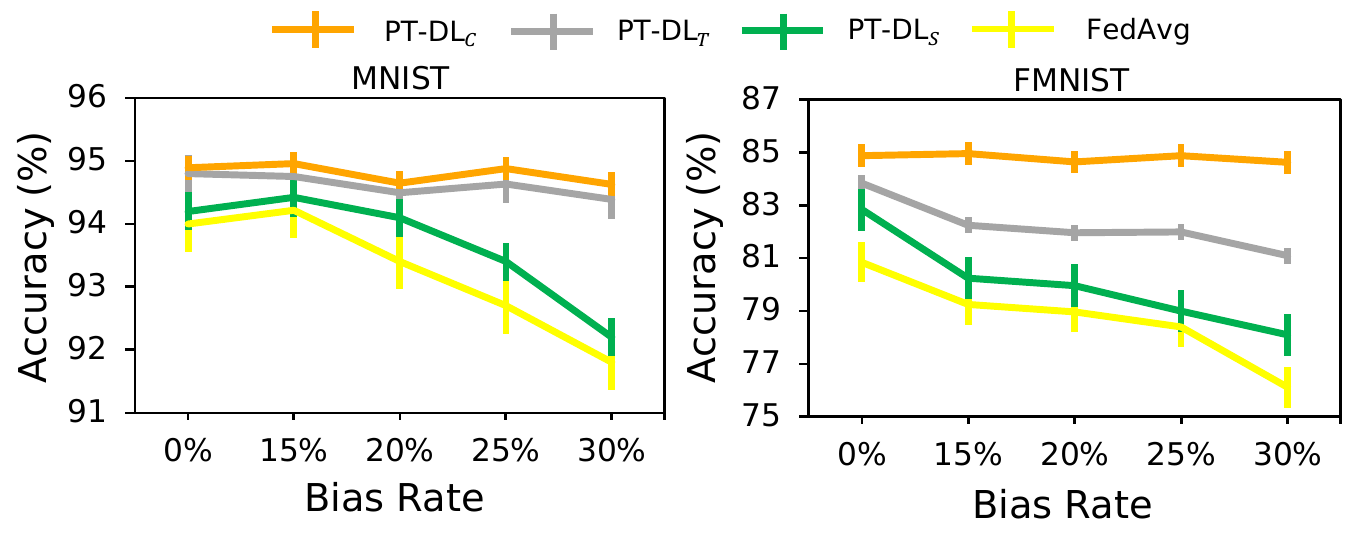}
\caption{Algorithms accuracy on MNIST(left) and FMNIST(right) dataset 
for varying bias rates. $K=100$ agents}
\label{fig-mnist-bias}
\end{figure}

\begin{figure*}[t]
\centering
\includegraphics[width=.99\textwidth]{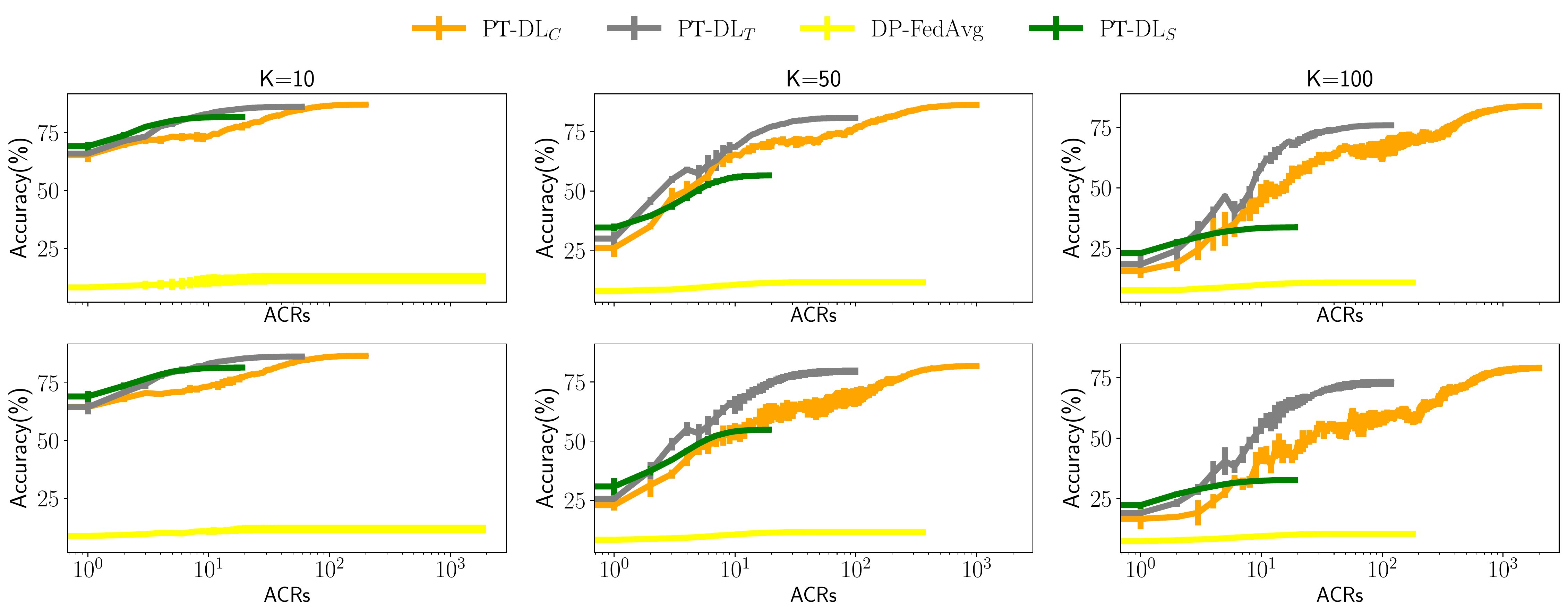}
\caption{Accuracy vs ACRs on \textbf{unbiased} MNIST (top) and \textbf{biased}  MNIST (bottom) datasets for $K=10$ (top), $K=50$ (middle), and $K=100$ (bottom) agents. The final privacy losses for each model with $K=10$, $K=50$, and $K=50$ respectively are $0.5, 1.1$ and $1.6$.}
\label{fig-mnist-dp}
\end{figure*}

Finally, the section analyses the results in a highly biased setting using the MNIST and FMNIST datasets. In the experiments each agent is given a dataset that contains a certain percentage of one particular label, while the rest of the data represents uniformly data with the remaining labels. 
This is useful to test agents with biased datasets influenced by a majority class. The results are summarized in Figure \ref{fig-mnist-bias}. 
Firstly, observe that all the PT-DL versions dominate FedAvg in terms of accuracy. Next, observe that the drop in accuracy is almost imperceptible for \chain{} and \tree, while PT-DL on the star topology reports the largest drops, at the increasing of the distribution bias. 
\emph{The results show that the proposed framework performs reliably in real-world conditions where agent data distribution may vary widely}.

\subsection{Privacy/Accuracy trade-off} 
\label{sec:privacy}
Finally, the analysis focuses on the accuracy of the distributed models under the differential privacy constraints. 
It follows similar settings as those described above and, additionally, it focuses on evaluating the models quality on two additional dimensions: varying of the number of agents $K$, and the number of Asynchronous Communication Rounds (ACRs) (See Definition \ref{def2}). 
To ensure privacy, this section uses DP-FedAvg \cite{mcmahan2017communication} (in lieu of FedAvg), under the privacy model adopted in the paper. 
The privacy settings for all models are: $c = 10$ and $\sigma = 2.0$ (line 8 of Algorithm \ref{alg:agent}) and the probability of pure DP violation $\delta = 1e-3$. The total privacy loss is computed based on the moment accountant method, which was discussed in Section \ref{sec:privacy_analysis}.

The first experiment evaluates all models using (unbiased) MNIST (Figure \ref{fig-mnist-dp} (top)) using 10 (left), 50 (center), and 100 (right) agents. The results for other datasets follow the same trends and are reported in the supplemental material.
The experiment fixes the maximal number of rounds each algorithm makes to 20, and reports, on the x-axis, the number of ACRs performed by the algorithms, which can also be interpreted as the number of parallel communications the agents perform during the evolution of the distributed process. A discussion on the relation of the ACRs and the number of updates illustrated in the figure plots is given in the appendix. 
The resulting final privacy losses for all algorithms are $\epsilon=0.5$ (for K=10), $1.1$ (for K=50), and $1.6$ (for K=100). The privacy loss increases (sub-linearly) with the number of agents as the dataset size $n_a$ each agent is given decreases while the mini-batch size $b$ is fixed.  


Firstly, observe that PT-DL produces private models that are significantly more accurate to those produced by DP-FedAvg, under the same privacy constraints. Our analysis indicates that this is due to the different number of aggregation operations performed by the various algorithms. This crucial behavior was also observed in \citet{mcmahan2017communication} that noted that for general non-convex objectives, averaging model parameters could produce an arbitrarily bad model.
Under a very tight privacy constraint $\epsilon \approx 1$, \chain{} and \tree{} consistently achieve models with more than 80\% and 76\% accuracy, respectively, while models produced by \star{} degrade their performance as the number of agents increase. This is due because experiments with small (large) $K$ have agents holding larger (smaller) datasets: $n_a = n/K$ for each agent $a \in [K]$. Since the privacy loss is affected by the data set size, for fixed mini-batch size $b$ and sampling rate $q$ (See Theorem \ref{thm:RDP_sampling}), models with lager $K$ will result in more noisy updates. This is however, greatly mitigated by the chain and tree topologies of PT-DL that perform far less aggregation operations, when compared to \star{} or DP-FedAvg.

The second experiment, illustrated in Figure \ref{fig-mnist-dp} (bottom),
focuses on biased datasets (bias rate of 30\%). The figure reports results for MNIST, but the conclusions extend to the other datasets adopted (see appendix). The results follow the same trends as those outlined above. Additionally, notice that the proposed models experience only a negligible drop in accuracy. 

\emph{These experiments demonstrate the robustness of the proposed models under a non-IID setting, a varying number of agents, and strict privacy constraints.}

\section{Conclusions}
\label{sec:conclusions}
This paper presented a privacy-preserving and trustable distributed learning (PT-DL) framework for multi-agent learning. 
The proposed framework is fully decentralized and relies on the notion of Differential Privacy to protect the privacy of the agents data and Ethereum smart contracts to ensure trustability. 
The paper showed the trustable scheme implemented in PT-DL is robust, with high probability, up to the case where half of the agents may collude.
The experimental evaluation illustrates the benefits of the proposed model in terms of accuracy and scalability, robustness to agents with highly biased datasets, to model inversion attacks, and to agent dropouts. Additionally, the model is shown to outperform standard differentially private federated learning algorithms in terms of accuracy and communication costs, for fixed privacy constraints. The results show that PT-DL may be a step toward a practical tool for privacy-preserving and trustable multi-agent learning.



\bibliographystyle{abbrvnat} 
\bibliography{bibliography}

\begin{thebibliography}{41}
\providecommand{\natexlab}[1]{#1}
\providecommand{\url}[1]{\texttt{#1}}
\expandafter\ifx\csname urlstyle\endcsname\relax
  \providecommand{\doi}[1]{doi: #1}\else
  \providecommand{\doi}{doi: \begingroup \urlstyle{rm}\Url}\fi

\bibitem[Abadi and et~al.(2016)]{abadi:16}
Abadi and et~al.
\newblock Deep learning with differential privacy.
\newblock In \emph{Proceedings of the 2016 ACM SIGSAC Conference on Computer
  and Communications Security}, 2016.

\bibitem[Balcan et~al.(2012)Balcan, Blum, Fine, and
  Mansour]{balcan2012distributed}
M.~F. Balcan, A.~Blum, S.~Fine, and Y.~Mansour.
\newblock Distributed learning, communication complexity and privacy.
\newblock In \emph{Conference on Learning Theory}, pages 26--1, 2012.

\bibitem[Bellet et~al.(2017)Bellet, Guerraoui, Taziki, and
  Tommasi]{Bellet2017FastAD}
A.~Bellet, R.~Guerraoui, M.~Taziki, and M.~Tommasi.
\newblock Fast and differentially private algorithms for decentralized
  collaborative machine learning.
\newblock \emph{ArXiv}, abs/1705.08435, 2017.

\bibitem[Bellet et~al.(2018)Bellet, Guerraoui, Taziki, and
  Tommasi]{bellet2018personalized}
A.~Bellet, R.~Guerraoui, M.~Taziki, and M.~Tommasi.
\newblock Personalized and private peer-to-peer machine learning, 2018.

\bibitem[{Carli} and {Dotoli}(2020)]{8736857}
R.~{Carli} and M.~{Dotoli}.
\newblock Distributed alternating direction method of multipliers for linearly
  constrained optimization over a network.
\newblock \emph{IEEE Control Systems Letters}, 2020.

\bibitem[Cheng et~al.(2018)Cheng, Yu, Hu, Yan, Li, Li, and
  Chen]{cheng2018leasgd}
H.-P. Cheng, P.~Yu, H.~Hu, F.~Yan, S.~Li, H.~Li, and Y.~Chen.
\newblock Leasgd: an efficient and privacy-preserving decentralized algorithm
  for distributed learning, 2018.

\bibitem[Cheng et~al.(2019)Cheng, Yu, Hu, Zawad, Yan, Li, and Chen]{cheng_19}
H.-P. Cheng, P.~Yu, H.~Hu, S.~Zawad, F.~Yan, S.~Li, and Y.~Chen.
\newblock Towards decentralized deep learning with differential privacy, 06
  2019.

\bibitem[Chowdhury et~al.(2020)Chowdhury, Rahman, Khandakar, Mazhar, Kadir,
  Mahbub, Islam, Khan, Iqbal, Emadi, and et~al.]{Chowdhury_2020}
M.~E.~H. Chowdhury, T.~Rahman, A.~Khandakar, R.~Mazhar, M.~A. Kadir, Z.~B.
  Mahbub, K.~R. Islam, M.~S. Khan, A.~Iqbal, N.~A. Emadi, and et~al.
\newblock Can ai help in screening viral and covid-19 pneumonia?
\newblock \emph{IEEE Access}, 2020.

\bibitem[Cohen and et~al.(2020)]{cohen2020covidProspective}
J.~P. Cohen and et~al.
\newblock Covid-19 image data collection: Prospective predictions are the
  future.
\newblock \emph{arXiv 2006.11988}, 2020.
\newblock URL \url{https://github.com/ieee8023/covid-chestxray-dataset}.

\bibitem[Cohen et~al.(2020)Cohen, Morrison, and Dao]{cohen2020covid}
J.~P. Cohen, P.~Morrison, and L.~Dao.
\newblock Covid-19 image data collection.
\newblock \emph{arXiv 2003.11597}, 2020.
\newblock URL \url{https://github.com/ieee8023/covid-chestxray-dataset}.

\bibitem[Dwork et~al.(2006)Dwork, McSherry, Nissim, and Smith]{dwork:06}
C.~Dwork, F.~McSherry, K.~Nissim, and A.~Smith.
\newblock Calibrating noise to sensitivity in private data analysis.
\newblock In \emph{Theory of cryptography conference}, pages 265--284.
  Springer, 2006.

\bibitem[Dwork et~al.(2014)Dwork, Roth, et~al.]{dwork:14}
C.~Dwork, A.~Roth, et~al.
\newblock The algorithmic foundations of differential privacy.
\newblock \emph{Foundations and Trends{\textregistered} in Theoretical Computer
  Science}, 9\penalty0 (3--4):\penalty0 211--407, 2014.

\bibitem[Fang et~al.(2019)Fang, Cao, Jia, and Gong]{fang2019local}
M.~Fang, X.~Cao, J.~Jia, and N.~Z. Gong.
\newblock Local model poisoning attacks to byzantine-robust federated learning,
  2019.

\bibitem[Fercoq et~al.(2014)Fercoq, Qu, Richt{\'a}rik, and
  Tak{\'a}{\v{c}}]{fercoq2014fast}
O.~Fercoq, Z.~Qu, P.~Richt{\'a}rik, and M.~Tak{\'a}{\v{c}}.
\newblock Fast distributed coordinate descent for non-strongly convex losses.
\newblock In \emph{2014 IEEE International Workshop on Machine Learning for
  Signal Processing (MLSP)}, pages 1--6. IEEE, 2014.

\bibitem[Fioretto and {Van Hentenryck}(2019)]{Fioretto:AAMAS-19}
F.~Fioretto and P.~{Van Hentenryck}.
\newblock Privacy-preserving federated data sharing.
\newblock In \emph{Proceedings of the International Conference on Autonomous
  Agents and Multiagent Systems {(AAMAS)}}, pages 638--646, 2019.

\bibitem[Fioretto et~al.(2016)Fioretto, Yeoh, and Pontelli]{Fioretto:AAAI-16}
F.~Fioretto, W.~Yeoh, and E.~Pontelli.
\newblock Multi-variable agents decomposition for {DCOP}s.
\newblock In \emph{Proceedings of the AAAI Conference on Artificial
  Intelligence {(AAAI)}}, pages 2480--2486, 2016.
\newblock URL
  \url{http://www.aaai.org/ocs/index.php/AAAI/AAAI16/paper/view/12093}.

\bibitem[Fioretto et~al.(2018)Fioretto, Pontelli, and Yeoh]{Fioretto:JAIR-18}
F.~Fioretto, E.~Pontelli, and W.~Yeoh.
\newblock Distributed constraint optimization problems and applications: {A}
  survey.
\newblock \emph{Journal of Artificial Intelligence Research}, 61:\penalty0
  623--698, 2018.
\newblock \doi{http://dx.doi.org/10.1613/jair.5565}.
\newblock URL \url{http://www.jair.org/papers/paper5565.html}.

\bibitem[Fredrikson et~al.(2015)Fredrikson, Jha, and
  Ristenpart]{10.1145/2810103.2813677}
M.~Fredrikson, S.~Jha, and T.~Ristenpart.
\newblock Model inversion attacks that exploit confidence information and basic
  countermeasures.
\newblock In \emph{ACM Conference on Computer and Communications Security
  ({CCS})}, 2015.

\bibitem[Geiping et~al.(2020)Geiping, Bauermeister, Dröge, and
  Moeller]{geiping2020inverting}
J.~Geiping, H.~Bauermeister, H.~Dröge, and M.~Moeller.
\newblock Inverting gradients -- how easy is it to break privacy in federated
  learning?, 2020.

\bibitem[Geyer et~al.(2017)Geyer, Klein, and Nabi]{geyer2017differentially}
R.~C. Geyer, T.~Klein, and M.~Nabi.
\newblock Differentially private federated learning: A client level
  perspective, 2017.

\bibitem[HIPAA(1996)]{hipaa}
HIPAA.
\newblock S.~a.~health insurance portability and accountability act.
\newblock Public law, 104:191, 1996.

\bibitem[Kairouz et~al.(2015)Kairouz, Oh, and
  Viswanath]{kairouz2015composition}
P.~Kairouz, S.~Oh, and P.~Viswanath.
\newblock The composition theorem for differential privacy.
\newblock In \emph{International conference on machine learning}, pages
  1376--1385. PMLR, 2015.

\bibitem[Kermany and et~al.(2018)]{KERMANY20181122}
D.~S. Kermany and et~al.
\newblock Identifying medical diagnoses and treatable diseases by image-based
  deep learning.
\newblock \emph{Cell}, 172\penalty0 (5):\penalty0 1122 -- 1131.e9, 2018.
\newblock ISSN 0092-8674.
\newblock \doi{https://doi.org/10.1016/j.cell.2018.02.010}.
\newblock URL
  \url{http://www.sciencedirect.com/science/article/pii/S0092867418301545}.

\bibitem[Kim et~al.(2018)Kim, Park, Bennis, and Kim]{kim2018device}
H.~Kim, J.~Park, M.~Bennis, and S.-L. Kim.
\newblock On-device federated learning via blockchain and its latency analysis.
\newblock \emph{arXiv preprint arXiv:1808.03949}, 2018.

\bibitem[Kone{\v{c}}n{\`y} et~al.(2015)Kone{\v{c}}n{\`y}, McMahan, and
  Ramage]{konevcny2015federated}
J.~Kone{\v{c}}n{\`y}, B.~McMahan, and D.~Ramage.
\newblock Federated optimization: Distributed optimization beyond the
  datacenter.
\newblock \emph{arXiv preprint arXiv:1511.03575}, 2015.

\bibitem[Ma et~al.(2015)Ma, Smith, Jaggi, Jordan, Richt{\'a}rik, and
  Tak{\'a}{\v{c}}]{ma2015adding}
C.~Ma, V.~Smith, M.~Jaggi, M.~I. Jordan, P.~Richt{\'a}rik, and
  M.~Tak{\'a}{\v{c}}.
\newblock Adding vs. averaging in distributed primal-dual optimization.
\newblock In \emph{Proceedings of the 32nd International Conference on
  International Conference on Machine Learning-Volume 37}, pages 1973--1982,
  2015.

\bibitem[McMahan et~al.(2017{\natexlab{a}})McMahan, Moore, Ramage, Hampson, and
  y~Arcas]{mcmahan2017communication}
B.~McMahan, E.~Moore, D.~Ramage, S.~Hampson, and B.~A. y~Arcas.
\newblock Communication-efficient learning of deep networks from decentralized
  data.
\newblock In \emph{Artificial Intelligence and Statistics}, pages 1273--1282.
  PMLR, 2017{\natexlab{a}}.

\bibitem[McMahan et~al.(2016)McMahan, Moore, Ramage, Hampson,
  et~al.]{mcmahan2016communication}
H.~B. McMahan, E.~Moore, D.~Ramage, S.~Hampson, et~al.
\newblock Communication-efficient learning of deep networks from decentralized
  data.
\newblock \emph{arXiv preprint arXiv:1602.05629}, 2016.

\bibitem[McMahan et~al.(2017{\natexlab{b}})McMahan, Moore, Ramage, Hampson, and
  y~Arcas]{mcmahan2017communicationefficient}
H.~B. McMahan, E.~Moore, D.~Ramage, S.~Hampson, and B.~A. y~Arcas.
\newblock Communication-efficient learning of deep networks from decentralized
  data, 2017{\natexlab{b}}.

\bibitem[Mironov(2017)]{Mironov_2017}
I.~Mironov.
\newblock Rényi differential privacy.
\newblock \emph{2017 IEEE 30th Computer Security Foundations Symposium (CSF)},
  Aug 2017.
\newblock \doi{10.1109/csf.2017.11}.
\newblock URL \url{http://dx.doi.org/10.1109/CSF.2017.11}.

\bibitem[Mironov et~al.(2019)Mironov, Talwar, and Zhang]{mironov2019rnyi}
I.~Mironov, K.~Talwar, and L.~Zhang.
\newblock Rényi differential privacy of the sampled gaussian mechanism, 2019.

\bibitem[Nagar et~al.(2021)Nagar, Tran, and Fioretto]{Nagar:AAMAS21}
A.~Nagar, C.~Tran, and F.~Fioretto.
\newblock A privacy-preserving and accountable multi-agent learning framework.
\newblock In \emph{Proceedings of the International Conference on Autonomous
  Agents and Multiagent Systems {(AAMAS)}}, pages 1605--1606, 2021.

\bibitem[Shamir and Srebro(2014)]{shamir2014distributed}
O.~Shamir and N.~Srebro.
\newblock Distributed stochastic optimization and learning.
\newblock In \emph{2014 52nd Annual Allerton Conference on Communication,
  Control, and Computing (Allerton)}, pages 850--857. IEEE, 2014.

\bibitem[Shokri et~al.(2017)Shokri, Stronati, Song, and Shmatikov]{shokri:17}
R.~Shokri, M.~Stronati, C.~Song, and V.~Shmatikov.
\newblock Membership inference attacks against machine learning models.
\newblock In \emph{2017 IEEE Symposium on Security and Privacy (SP)}, pages
  3--18. IEEE, 2017.

\bibitem[Weng et~al.(2019)Weng, Weng, Zhang, Li, Zhang, and
  Luo]{weng2019deepchain}
J.~Weng, J.~Weng, J.~Zhang, M.~Li, Y.~Zhang, and W.~Luo.
\newblock Deepchain: Auditable and privacy-preserving deep learning with
  blockchain-based incentive.
\newblock \emph{IEEE Transactions on Dependable and Secure Computing}, 2019.

\bibitem[Wood et~al.(2014)]{wood2014ethereum}
G.~Wood et~al.
\newblock Ethereum: A secure decentralised generalised transaction ledger.
\newblock \emph{Ethereum project yellow paper}, 151\penalty0 (2014):\penalty0
  1--32, 2014.

\bibitem[Yang(2013)]{yang2013trading}
T.~Yang.
\newblock Trading computation for communication: Distributed stochastic dual
  coordinate ascent.
\newblock In \emph{Advances in Neural Information Processing Systems}, pages
  629--637, 2013.

\bibitem[{Zhang} and {Zhu}(2017)]{7563366}
T.~{Zhang} and Q.~{Zhu}.
\newblock Dynamic differential privacy for admm-based distributed
  classification learning.
\newblock \emph{IEEE Transactions on Information Forensics and Security},
  12\penalty0 (1):\penalty0 172--187, 2017.

\bibitem[Zhang et~al.(2018)Zhang, Khalili, and Liu]{zhang2018improving}
X.~Zhang, M.~M. Khalili, and M.~Liu.
\newblock Improving the privacy and accuracy of admm-based distributed
  algorithms, 2018.

\bibitem[Zhang and Xiao(2018)]{zhang2018communication}
Y.~Zhang and L.~Xiao.
\newblock Communication-efficient distributed optimization of self-concordant
  empirical loss.
\newblock In \emph{Large-Scale and Distributed Optimization}, pages 289--341.
  Springer, 2018.

\bibitem[Zhang et~al.(2013)Zhang, Duchi, Jordan, and
  Wainwright]{zhang2013information}
Y.~Zhang, J.~Duchi, M.~I. Jordan, and M.~J. Wainwright.
\newblock Information-theoretic lower bounds for distributed statistical
  estimation with communication constraints.
\newblock In \emph{Advances in Neural Information Processing Systems}, pages
  2328--2336, 2013.

\end{thebibliography}

\newpage
\appendix
\section{Additional Background on Blockchain}

\subsection*{\textbf EVM} The Ethereum Virtual Machine (EVM) is a distributed Turing complete system that allows us to build Smart Contracts written in Solidity or Vyper, and execute them using the Ethereum Blockchain.

A temporary Ethereum address generated to sign transactions and to communicate with the AP-DL Smart Contract. This ensures that the Smart Contract does not use the same Ethereum address to refer to the device only once.

\subsection*{\textbf Zero-Knowledge Proof}  A Zero-Knowledge Proof is a method by which one participant in the network \textit{(prover)} can prove to another participant \textit{(verifier)} that they know a value x, without revealing the value of x. There are two types of ZKPs — interactive and non-interactive. An interactive proof involves a series of questions from the \textit{verifier} to the \textit{prover} to prove their knowledge whereas a non-interactive proof relies on the \textit{verifier} picking a random challenge for the \textit{prover} to solve. Multiple interactions between the \textit{prover} and \textit{verifier} become unnecessary as the proof exists in a single message sent from the \textit{prover} to the \textit{verifier}. We utilize a non-interactive ZK proof also known as ZK-STARK.
The Zero Knowledge proof allows the progress made by the agent to be verified without revealing any data. By storing the proof on-chain agents can verify the validity of the progress of any other agent without requiring any interaction. Secondly, as anyone who has the private key to the signing account of the ZKP can prove that this progress was indeed done by them, they can be rewarded or penalized depending on the update they sent. So in an incentivized system, agents would be motivated to publish good updates.

\subsection*{\textbf Blockchain Operations} Connecting to the blockchain and sending transactions to it requires access to an Ethereum Node. There are two ways of accessing one. First, running a node locally on a system that directly connects to the network, downloading it's entire state. Secondly, by using a Remote RPC Endpoint which is essentially a node running on a remote machine that handles running the node and relays our transaction data. The transaction data cannot be manipulated as it is signed by a private key. This allows IoT Devices to data centers to interact with blockchains. Permissionless Blockchains like Ethereum allow anyone to create decentralized applications using Smart Contracts. These smart contracts are written in programming languages like Solidity and Vyper. All this combined creates a decentralized and turing complete atomic computation environment.

\section{Incentivization}
One of abilities of blockchain is to support decentralized finance. This allows us to incentivize the training process rewarding and penalizing agents relative to their contribution towards improving the accuracy of the global model. After an agent completes its training process it creates a ZKP of its progress. It internally uses a Commit Reveal Scheme. The agent will locally generate a random number(the reveal) with a salt and then hash it and send it on-chain (the commit). Finally, we’ll hash their random number (that the miner shouldn't know about) with the block-hash on the commit block (that the player couldn't know about). This final hash is a pretty good source of randomness on-chain because it gives the player an assurance that the miner didn’t manipulate it. This process is used to hide the identity of the agent if they choose to redeem their rewards using the reveal to a new address with no transaction history. If the model update from this agent is accepted, then the ZKP is stored on the chain with the users address and can be redeemed by any address as long as the have the reveal to value of the commit. The ZKP also allows another agent to validate the progress of an other agent using the ZKP without requiring to interact with the agent. These measuring help in incentivizing the users to push good updates and make agents accountable to the other agents in a privacy-preserving manner.

\section{Software and Models Settings}

\smallskip\noindent\textbf{Computing Infrastructure} 
All experiments were performed on a cluster equipped with Intel(R) 
Xeon(R) Platinum 8260 CPU @ 2.40GHz and 8GB of RAM.

\smallskip\noindent\textbf{Software and Libraries}  
All models and experiments were written in Python 3.7. 
All models in the paper were implemented in Pytorch 1.5.0. The Tensorflow Privacy library \href{https://github.com/tensorflow/privacy} is utilized for privacy computation. 

\smallskip\noindent\textbf{Architectures}  The network architecture for MNIST/FMNIST dataset is reported in Table \ref{table-struct}. For COVID dataset, the RESNET18 model was utilized due to its superior performance in classifying  normal and COVID-19 pneumonia classes, \cite{Chowdhury_2020}. To speed up the training progress, all layers except the last fully connected one were frozen. Hence, the parameter learning takes place only at this last layer. 

\begin{table}[!tbh]
\centering
{%
\begin{tabular}{l l l l} 
 \toprule
 Layer (type) & Output Shape & Param \# & Tr.~Param \# \\
 \midrule
 Conv2d-1 & $ [1, 32, 26, 26] $ & $ 320$ & $ 320$  \\ 
 Conv2d-2 &  $ [1, 64, 24, 24]$ & $ 18,496$ & $ 18,496$ \\
 Dropout2d-3 & $ [1, 64, 12, 12] $ & $ 0$ & $ 0$ \\ 
 Linear-4 & $ [1, 128]$ & $ 1,179,776 $ & $ 1,179,776 $  \\
 Dropout2d-5 & $ [1, 128] $ & $ 0 $ & $ 0 $ \\
 Linear-6 & $ [1, 10]$ & $ 1,290$ & $ 1,290$ \\
 \bottomrule
\end{tabular}
}
\caption{Model Structure}
\label{table-struct}
\end{table}

\smallskip\noindent\textbf{Code} 
The implementation of all models and the relevant experiments  will be released upon publication.

\smallskip\noindent\textbf{Algorithms' Setting} 
The parameters settings for all models across datasets are given in Table  \ref{tab:parameter_setting_1}.  The settings for private extension of these models are summarized in Table  \ref{tab:parameter_setting_2}.

\begin{itemize}
    \item Inner epochs: the number of local epochs each agent performs on its own data in a single round.  
    \item ACRs: the total number of asynchronous communication rounds(ACRs, see Definition 2) that each agent perform during the course of model's training. This parameter depends strongly on model, the number of agents $K$ and the total number of training sample $n$ in case of Fed-Avg. Given a fixed number of rounds $N$,  with $K$ agents, the number of ACRs for chain, tree, star topology respectively is: $N K $, $N \log_2 K $ and $N$. 
\end{itemize}

\begin{table}[htb]
\centering
\resizebox{0.86\linewidth}{!}{%
\begin{tabular}{r r r r r r}
\toprule
 Model &  Data &  Batch size & \# Rounds & \# Inner epochs & \# ACRs \\
\midrule
 SGD &MNIST/FMNIST & 64 & NA &30  & NA \\
 $\textrm{PA-DL}_C$ & MNIST/FMNIST & 64 &30  & 1 &  $30 \  K$  \\
  $\textrm{PA-DL}_T$ & MNIST/FMNIST&  64 &30  & 1 & $30\  \log_2 K $    \\
   $\textrm{PA-DL}_S$ & MNIST/FMNIST&  64 &30  & 1 &  30\\ 
   $\textrm{FedAvg}$ & MNIST/FMNIST&  64 &30  & NA & $\frac{30  n}{K 64 }$   \\
   \midrule
       SGD & COVID &  8& NA & 30 & NA \\
 $\textrm{PA-DL}_C$ & COVID & 8 &30  & 1 &  $30\  K$ \\
  $\textrm{PA-DL}_T$ & COVID&  8 &30  & 1 & $30\  \log_2 K $   \\
   $\textrm{PA-DL}_S$ &  COVID&  8 &30  & 1 &  30 \\
      $\textrm{FedAvg}$ & COVID&  8 &30  & NA & $\frac{30  n}{K 16 }$   \\
\bottomrule
\end{tabular}
}
\caption{Parameters settings for all \textbf{non-private} models presented in the paper, where $K$ is the number of agents, $n$ is the total number of training samples.  }
\label{tab:parameter_setting_1}
\end{table}

\begin{table}[htb]
\centering
\resizebox{0.85\linewidth}{!}{%
\begin{tabular}{r r r r r r}
\toprule
 Model &  Data &  Batch size & \# Rounds & \# Inner epochs & \# ACRs \\
\midrule
 $\textrm{PA-DL}_C$ & MNIST/FMNIST & 64 &20  & 1 &  $20 \  K$  \\
  $\textrm{PA-DL}_T$ & MNIST/FMNIST&  64 &20  & 1 & $20\  \log_2 K $    \\
   $\textrm{PA-DL}_S$ & MNIST/FMNIST&  64 &20  & 1 &  20\\ 
   $\textrm{DP-FedAvg}$ & MNIST/FMNIST&  64 &20  & NA & $\frac{20  n}{K 64 }$   \\
   \midrule
 $\textrm{PA-DL}_C$ & COVID & 16 &20  & 1 &  $20\  K$ \\
  $\textrm{PA-DL}_T$ & COVID&  16 &20  & 1 & $20\  \log_2 K $   \\
   $\textrm{PA-DL}_S$ &  COVID&  16 &20  & 1 &  20 \\
      $\textrm{DP-FedAvg}$ & COVID&  16 &20  & NA & $\frac{20  n}{K 16 }$   \\
\bottomrule
\end{tabular}
}
\caption{Parameters settings for private extensions of all models presented in the paper, where $K$ is the number of agents, $n$ is the total number of training samples.  }
\label{tab:parameter_setting_2}
\end{table}

\section{Extended Results}

The experiments reported below extend the analysis reported in Section 7.3 of the main paper, that focuses on the impact of the privacy constraints onto the accuracy of the different models. Figure \ref{fig:fig-dp-2} provides a comparison across all models on the unbiased versions of MNIST (left) and FMNIST (right) datasets, while Figure \ref{fig-biased-dp-2} illustrates the performance of all models on the biased counterparts of these datasets. The parameter settings for all models are summarized in Table  \ref{tab:parameter_setting_2}. The resulting final privacy losses for all algorithms are $\epsilon=0.5$ (for K=10), $1.1$ (for K=50), and $1.6$ (for K=100). The privacy loss increases (sub-linearly) with the number of agents as the dataset size $n_a$ each agent is given decreases while the mini-batch size $b$ is fixed.  

Notice that the reason why the algorithms stop to different ACR is because they are all ran for 20 rounds, and their total number of ACRs is reported in Table \ref{tab:parameter_setting_2}.

The results follow trends as those reported in the main paper. 
In particular, PA-DL produces private models that are significantly more accurate to those produced by DP-FedAvg, under the same privacy constraints. 

Similar observation are also made for the COVID dataset, illustrated in Figure \ref{fig-dp-COVID-19-2}. Since this is a very small dataset, the experiments use $K=5$ and $K=10$ agents, and the data samples are distributed uniformly across the $K$ agents.

\begin{figure}[t]
\centering
\includegraphics[width=0.85\textwidth]{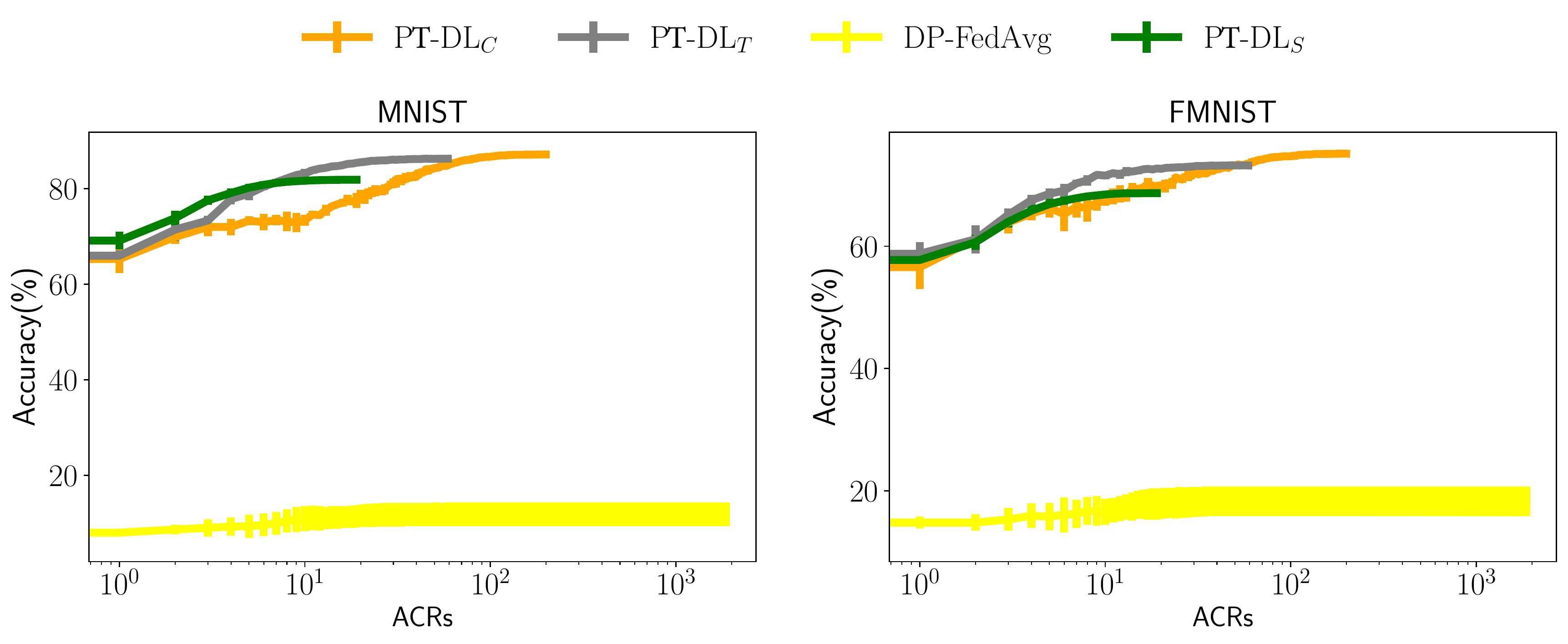} 
\includegraphics[width=0.85\textwidth]{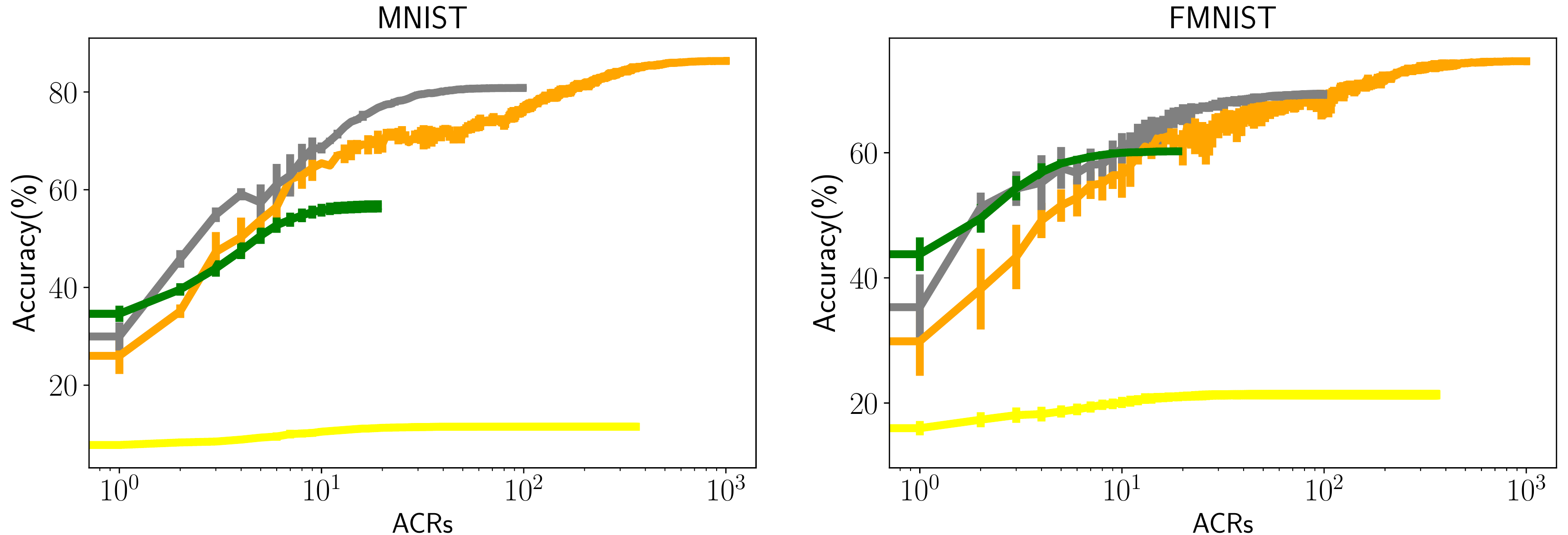} 
\includegraphics[width=0.85\textwidth]{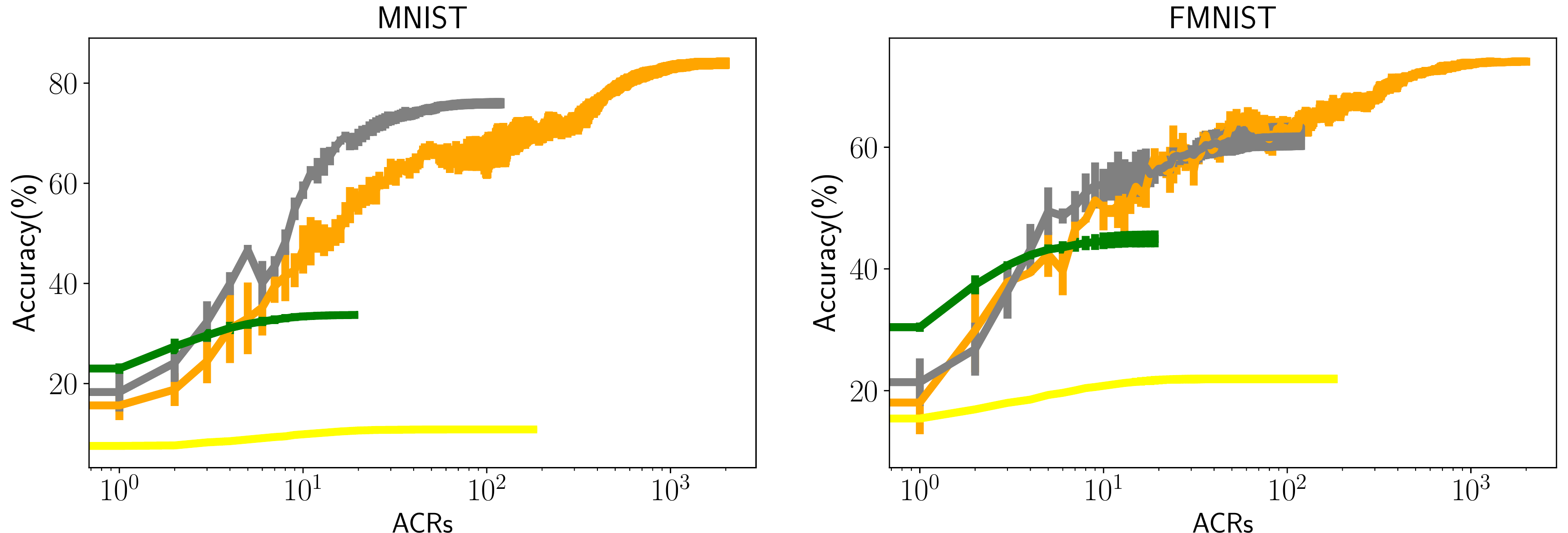} 
\caption{Accuracy vs ACRs on unbiased MNIST (left) and unbiased  FMNIST (right) datasets for $K=10$ (top), $K=50$ (middle), and $K=100$ (bottom) agents.  The final privacy loss of each model for $K=10$, $K=50$, and $K=50$ respectively are $0.5, 1.1$ and $1.6$.
\label{fig:fig-dp-2}}
\end{figure}

\begin{figure}[t]
\centering
\includegraphics[width=0.85\textwidth]{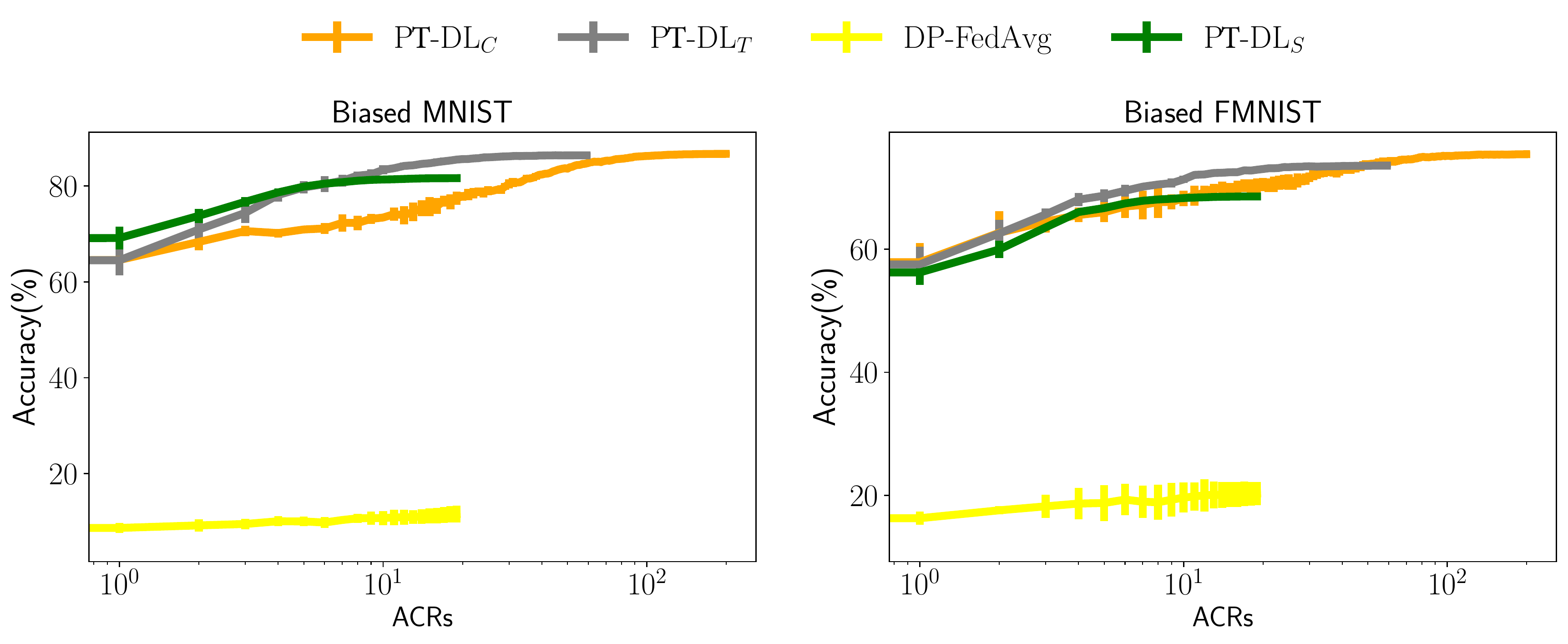}
\includegraphics[width=0.85\textwidth]{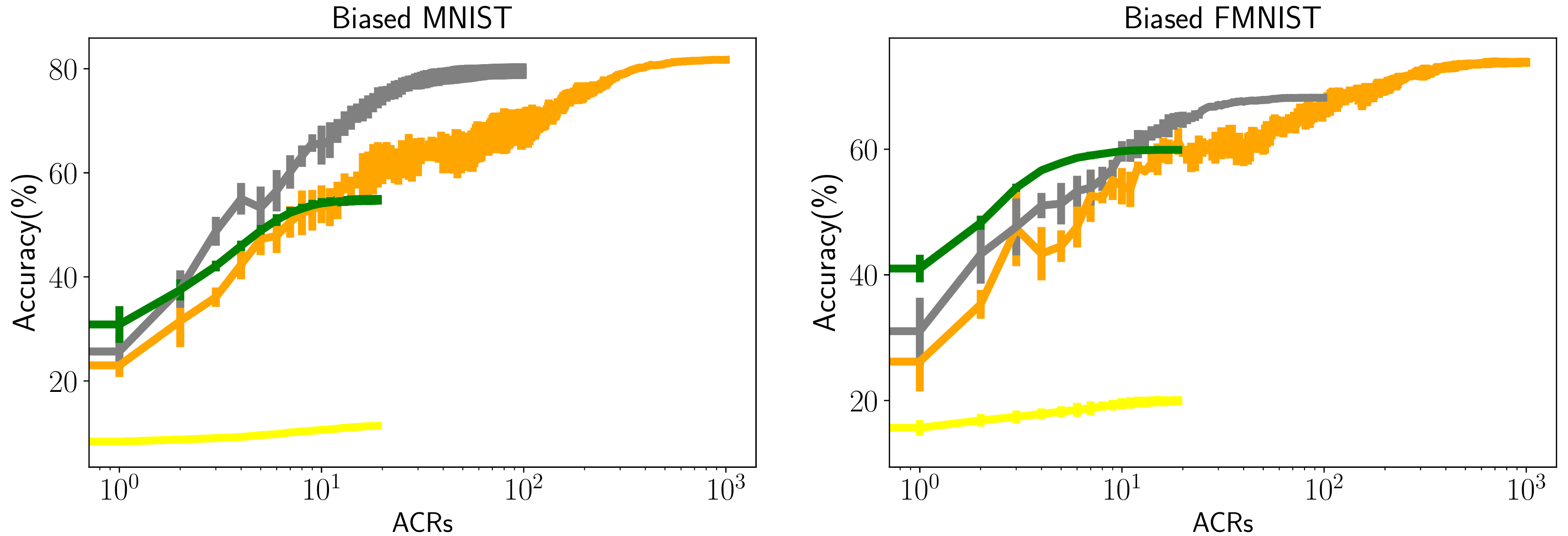}
\includegraphics[width=0.85\textwidth]{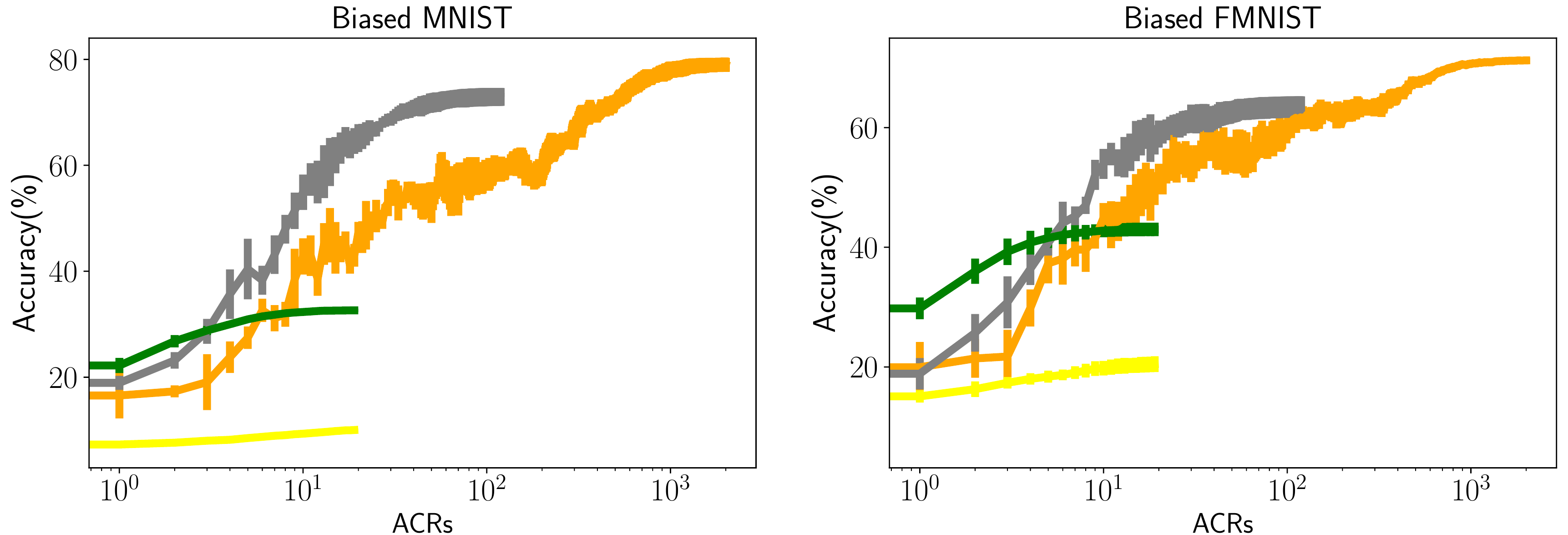} 
\caption{Accuracy vs ACRs on \textbf{biased} MNIST (left) and \textbf{biased}  FMNIST (right) datasets for $K=10$ (top), $K=50$ (middle), and $K=100$ (bottom) agents. The final privacy loss of each model for $K=10$, $K=50$, and $K=50$ respectively are $0.5, 1.1$ and $1.6$. }
\label{fig-biased-dp-2}
\end{figure}

\begin{figure}[t]
\centering
\includegraphics[width=0.85\textwidth]{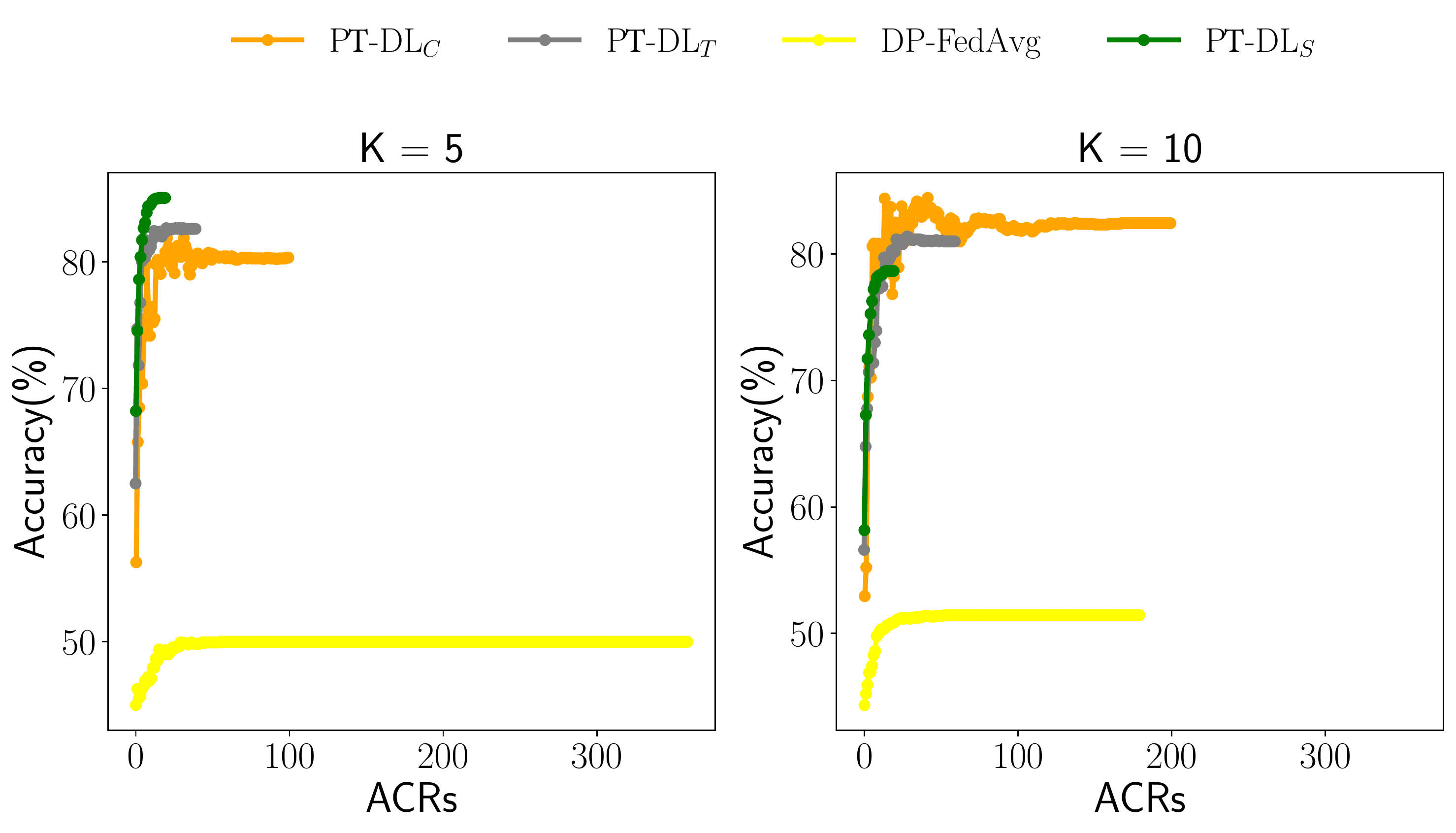} 
\caption{Accuracy vs ACRs on COVID-19 dataset for $K=5$ (left) and $K=10$(right). The final privacy loss of each model for $K=5$ and $K=10$ respectively are $2.4$ and $3.5$. }
\label{fig-dp-COVID-19-2}
\end{figure}

\end{document}